\documentclass[journal]{IEEEtran}
\IEEEoverridecommandlockouts
\usepackage{cite}
\usepackage{amsmath,amssymb,amsfonts}
\usepackage{dsfont} 
\usepackage{algorithmic}
\usepackage{graphicx}
\usepackage{epstopdf} 
\usepackage{textcomp}
\usepackage{xcolor}
\usepackage{dblfloatfix} 
\usepackage[utf8]{inputenc}
\usepackage[english]{babel}
\usepackage{xspace} 
\usepackage{algorithm}
\usepackage{subcaption}
\usepackage{amsthm} 
\usepackage{mathtools} 
\usepackage[normalem]{ulem}
\usepackage{mathtools}

\usepackage{url}

\usepackage{balance}

\usepackage{multicol}
\usepackage{enumerate}
\usepackage{enumitem}
\usepackage{verbatim}

    \newtheorem{definition}{Definition}
    \newtheorem{theorem}{Theorem}
    \newtheorem{lemma}{Lemma}
    \newtheorem{corollary}{Corollary}
    \newtheorem*{newproblem}{The heterogeneous partition problem}
    \newtheorem*{UPSproblem}{The UAV Persistent Service problem}

\captionsetup[table]{labelfont={footnotesize,sc}, font={footnotesize,sc}, name=TABLE}
\captionsetup[figure]{labelfont=footnotesize, font=footnotesize}
\captionsetup[subfigure]{labelfont=footnotesize, font=footnotesize}

\def\BibTeX{{\rm B\kern-.05em{\sc i\kern-.025em b}\kern-.08em
    T\kern-.1667em\lower.7ex\hbox{E}\kern-.125emX}}

\newcommand{\blue}[1]{{\color{black}{#1}}}

\catcode`\%=12
\newcommand\pcnt{
\catcode`\%=14


\begin{document}

\title{Optimizing UAV Recharge Scheduling \\ for Heterogeneous and Persistent Aerial Service$^{\star}$\thanks{${}^\star$A preliminary version of this manuscript appeared in the proceedings of IEEE GLOBECOM 2021~\cite{arribas2021optimal}.\vspace{-0mm}}}

\author{Edgar~{Arribas},
        Vicent~{Cholvi},
        and~Vincenzo~{Mancuso}
\thanks{E.~{Arribas} is with Universidad CEU San Pablo, Madrid (Spain).
E-mail: edgar.arribasgimeno@ceu.es.}
\thanks{V.~{Cholvi} is with Universitat Jaume I, Castell\'o (Spain). E-mail: vcholvi@uji.es}
\thanks{V.~{Mancuso} is with IMDEA Networks Institute, Legan\'es (Spain). E-mail: vincenzo.mancuso@imdea.org}
}

%

\markboth{Submitted to IEEE Transactions on Robotics}%
{Arribas \MakeLowercase{\textit{et al.}}: Bare Demo of IEEEtran.cls for IEEE Journals}

\maketitle

\begin{abstract}
The adoption of UAVs in communication networks is becoming reality thanks to the
deployment of advanced solutions for connecting UAVs and
using them as communication relays.
However, the use of UAVs
introduces novel energy constraints and
scheduling challenges in the dynamic management of network devices, due to the
need to call back and recharge, or substitute, UAVs that run out of energy.
In this paper, we design UAV recharging schemes under realistic assumptions on
limited flight times and time consuming charging operations.
Such schemes are designed to
minimize the size of the fleet to be devoted to a persistent service of a set of aerial locations, hence its cost.
We consider a fleet of homogeneous UAVs both under homogeneous and heterogeneous
service topologies.
For UAVs serving aerial locations with homogeneous distances to a recharge station,
we design a simple scheduling, that we name \textsc{HoRR}, which we prove to be feasible and optimal, in the sense that it uses the minimum possible
number of UAVs to guarantee the coverage of the aerial service locations.
For the case of non-evenly distributed aerial locations, we demonstrate that the problem becomes NP-hard,
and design a lightweight recharging scheduling scheme, \textsc{PHeRR}, that extends the operation
of \textsc{HoRR} to the heterogeneous case, leveraging the partitioning of the set of service locations.
We show that \textsc{PHeRR} is near-optimal because it approaches the performance limits identified
through a lower bound that we formulate on the total fleet size.
\end{abstract}

\begin{IEEEkeywords}
UAV, recharge schedule, optimization, persistent aerial service.
\end{IEEEkeywords}
\section{Introduction}

\IEEEPARstart{U}{nmanned}
aerial vehicles (UAVs), and lightweight drones in particular, are becoming attractive for service providers due to their ability to serve communication purposes and extend the capabilities of their fixed infrastructure. UAVs can be useful in many situations (e.g., in case of planned communication traffic surges due to massive meetings, disaster recovery missions, military applications, etc.), and they have played an important role during the COVID-19 pandemic to deliver goods and to irrorate disinfectants~\cite{9086010}. There is also a strong interest for UAVs in the IoT community, as they can be flexibly used for generating data and for harvesting data from fixed sensors. Therefore, many recent efforts tackle the integration of UAV-carried network nodes in cellular networks, either to control drone routes effectively or to experiment with relay schemes freshly introduced with 5G~\cite{8641421}.

With the current advances in communication technologies, the bottleneck in the adoption of UAVs no longer lays in architectural and protocol challenges and constraints, but rather in the limited energy that they can rely on. For this reason, flying several UAVs in a real scenario requires the accurate planning and monitoring of their energy consumption. With multiple UAVs and limited stations where the UAVs can land to get refueled, recharged or to get their batteries replaced, network designers need to solve new problems, and impose new constraints to their resource management algorithms. For instance, in a UAV-based tactical communication network or in a patrolling mission, UAVs have to be recharged cyclically while guaranteeing service at all times. The problem is threefold: $(i)$ the need to recharge a UAV (or change its battery) affects the service provided by the network of UAV; therefore, $(ii)$ the fleet of UAVs has to account for redundancy, so that when a UAV flies back to get fresh energy, the operation of the remaining drones remains consistent with the objectives of the mission; and $(iii)$ unlike traditional swapping schemes, the time during which a UAV with low energy goes offline to recharge is not negligible, since neither charging times nor the time to fly back and forth are negligible.

In this article, we formally define the problem of UAV recharge scheduling by accounting all associated overheads. Thereafter we study two scenarios: we start with a simple ideal case in which UAVs are dispatched at homogeneous distances from their recharging station, show that the problem admits optimal solutions, and we design \textsc{HoRR}, an optimal algorithm that implements an optimal solution in which UAV operational shifts repeat cyclically. Then we move to a more generic scenario in which UAV distances from the recharging station are heterogeneous. In this case the problem becomes NP-hard to solve, as we formally prove. We therefore resort to heuristics that generalize \textsc{HoRR}. Specifically we design the \textsc{HeRR} routine as the generalization of \textsc{HoRR}, obtained by accounting for some buffer time in the UAV shifts, which compensate for heterogeneous displacement distances that they have to cover in order to be recharged.  We improve the performance of \textsc{HeRR} by partitioning the fleet of UAVs into groups in which distances are as homogeneous as possible, and by applying \textsc{HeRR} to each group separately. The resulting algorithm, which we name \textsc{PHeRR} is shown to be near-optimal by comparing its performance with a lower bound of the problem.



This article significantly extends our preliminary results published in~\cite{arribas2021optimal}. That work focuses on the analysis of the ideal homogeneous scenario dealt with in this article and presents the \textsc{HoRR} algorithm and its optimality, which are also compactly illustrated in this manuscript in part of Section~\ref{s:scheduling}. However, most of the analysis and results derived in this article have not been previously published. In particular, the analysis of non-ideal heterogeneous cases and the derivation of the corresponding algorithms and properties is fully novel.

The rest of the article is organized as follows: Section~\ref{s:related} discusses the related work. Section~\ref{s:scenario} describes the reference UAV scenario studied in this article. Section~\ref{s:scheduling} presents the case of homogeneous distances to be covered by UAVs, and the optimality of  our solution. Section~\ref{s:hetero} illustrates the complexity of solving the recharge scheduling of UAVs under heterogeneous conditions. Section~\ref{sec:het_herr} proposes a near-optimal heuristic for the generic heterogeneous case, whose performance analysis is presented in Section~\ref{s:perf}. Finally, Section~\ref{s:conclusions} provides the conclusions.

\section{Related Work}
\label{s:related}
The increasing growth of the UAVs ecosystem during the last years has led to an increasing number of management strategies used to overcome both battery limitations as well as lack of available UAVs in a given situation.

Most of the work on energy management of UAV-based technologies focuses on the Vehicle Routing problem~\cite{MONTOYATORRES2015115}. Namely, the goal is to generate routes for a team of agents leaving a starting location, visiting a number of target locations, and returning back to the starting location. Among the many variants of such a problem, there is the possibility that the recharging stations in which the UAVs will be powered be either stationary or mobile~\cite{8660495}.
For instance, machine learning achieves near optimal results to solve UAV routing problems with recharging stops as studied in~\cite{ERMAGAN2022105524}, which shows results within a few percents from the optimal route.
Besides, the full-fledged automation of recharging stations has proven feasible in real testbeds~\cite{9138314}.
This opens to the advent of new automated applications and strategies, which involve pricing and recharge options. As an example, a credit-based game theory approach to UAV recharging at stationary stations is studied in~\cite{9019832}
and mechanisms to optimize the position of recharging stations have been proposed, e.g., in~\cite{COKYASAR2021146} and \cite{QIN2021107714} for fixed and mobile recharging stations, respectively.

The main aim of our work consists of the management of a fleet of UAVs intended to perform a persistent monitoring of a number of locations, which goes beyond energy management for routing problems. In~\cite{7846742}, the authors analyze the monitoring of a number of geographic areas, so that the objective was to minimize the number of UAVs that are needed to provide a continuous coverage. Each UAV was assumed to travel through the different areas, proposing a heuristic algorithm for such a task. However, the strategy of traveling through the different locations to be monitorized is not, in general, the best approach~\cite{hartuv2018scheduling}, and there is also the problem of finding the best cycle that UAVs must follow.

In~\cite{hartuv2018scheduling}, the authors show that the best replacement strategies are those in which each UAV to be replaced from a location directly returns to replace/recharge its battery (i.e., each UAV should monitor only one location). They also provide two approximation algorithms: one with an approximation factor upper bound of $1.5$ (when all the locations are known in advance) and the other with an average factor of $1.7$ (for the online version). They were followed by the authors of~\cite{8797808}, who consider minimizing the number of UAVs with multiple recharging stations. Using an approach similar to that in~\cite{hartuv2018scheduling}, in a subsequent work~\cite{9213932} the authors also considered the case with multiple recharging stations, showing that the problem is NP-hard, even for a single additional UAV (i.e., with just one back-up UAV needed to guarantee the service). They also provide two approximation algorithms for solving the problem, with approximation factors not worse that $1.6$ (offline) and $1.7$ (online), showing that they outperform the work of~\cite{8797808}.


\section{Reference Scenario}
\label{s:scenario}

We consider a set of UAVs that must perform a persistent task in a set $\mathcal{N}$ of aerial locations. We say that a UAV is \emph{covering/providing service} when it is at an aerial target location to perform the persistent task. Clearly, as time passes by, UAVs consume energy, and therefore they will periodically need to go to a \emph{recharging station}~(\emph{RS}) to recharge.

\begin{figure}
    \centering
    \vspace{1mm}
    \includegraphics[trim={5mm 3mm 4mm 3mm}, width=0.8\linewidth]{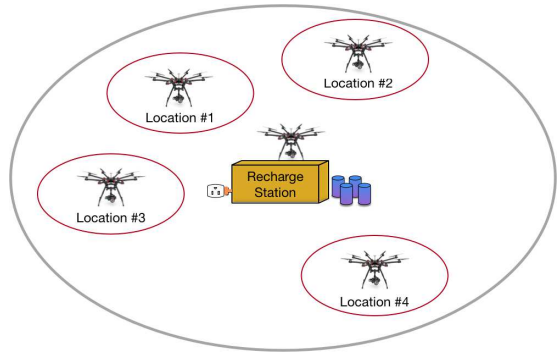}
    \vspace{-0mm}
    \caption{Scenario of the UAV persistent  problem.}
    \vspace{-2mm}
    \label{fig:ilustration}
\end{figure}

\paragraph{The RS} We consider that there is a single \emph{RS}, which provides a lot of flexibility, as it can be easily located in the most suitable place. 
When a UAV lands on the \emph{RS}, an automatic device will take care of replacing its battery, so that the UAV can be fully operational in short time. Alternatively, it is also possible that a UAV recharges its current battery at the \emph{RS}, although that will take significantly longer. Figure~\ref{fig:ilustration} illustrates the above mentioned scenario.

\paragraph{UAVs} In our work, we assume a fleet composed by identical UAVs. We denote as $f$ the maximum \emph{flight time} of each UAV (i.e., the battery life) and as $c$ the time it takes to replace/recharge its battery (which also depends on the \emph{RS}).

At this point, we remark that, although our results are valid regardless of the value of $c$, in our illustrative examples, as well as in the numerical analysis, we will assume that $c=15$~s. We used  this value  since, on one hand, the current battery exchange technology is mature enough to make such replacements safely. And, on the other hand, different studies have found  that such replacements can be carried out (from landing to take off) in less than $15$~s~\cite{michini2011automated,liu2017quado}.

\paragraph{Locations and displacements} UAVs need to cover aerial locations situated at arbitrary distances from the \emph{RS}. We denote as $g_i$ the \textsl{displacement time} that UAV needs to fly from the \emph{RS} to location $i\!\in\!\mathcal{N}$ (or viceversa). Hence, the time that a UAV requires to cover the location $i$ is $c+2g_i$ (i.e., the battery's replace/recharging time plus twice the displacement time). Since to cover a location $i$ a UAV needs to be able to, at least, fly to it and come back (which takes $2g_i$ time units) before its battery runs out (i.e., before $f$ time units), we assume that $2g_i\!<\!f$, for all $i\!\in\!\mathcal{N}$.

\,
\,
\,
\,
\begin{UPSproblem}
With the scenario described above, the UAV Persistent Service (UPS) problem consists in $(i)$ finding a recharge scheduling so that, at each time instant, each location in $\mathcal{N}$ is covered  by one UAV, and $(ii)$ doing it with the minimum number of UAVs. The recharge scheduling must instruct each UAV when to fly and cover a given location, and when to go to the \emph{RS} and replace/recharge its battery.
\end{UPSproblem}



\section{Homogeneous Scenarios}
\label{s:scheduling}

In this section, we consider the case in which the distance from the \emph{RS} to each of the aerial locations is homogeneous (i.e.,  $g_i\equiv g$, $\forall i\!\in\!\mathcal{N}$).

First, we define the \emph{Homogeneous Rotating Recharge} (\textsc{HoRR}) algorithm, and show that it solves the \emph{UPS} problem. Then, we provide some results regarding how the UAVs are instructed to recharge and prove that  \textsc{HoRR} is optimal, in the sense that it minimizes the number of  UAVs.

\subsection{The \textsc{HoRR} algorithm}

The rationale behind how the algorithm has been designed is based on the fact that the distances to the locations to be covered are homogeneous. Thus, the UAVs that cover the locations are cyclically replaced at fixed time intervals, ensuring that they will provide service in the locations for as long as possible, and always replacing the UAV with the lowest energy.

The code of the \textsc{HoRR} algorithm is shown in Algorithm~\ref{a:scheduling}. It works as follows: at each time interval of $x$ time units (Steps~\ref{st:x-g} and~\ref{st:x}), the UAV with less energy goes to recharge, regardless of whether or not it is actually running out of energy. In addition, $g$ time units before that UAV is instructed to recharge, a backup UAV is sent to replace it, so that the coverage  is maintained at all times. On its side, a recharged UAV is considered as a backup UAV.

\begin{algorithm}[t]
  \caption{\small\emph{Homogeneous Rotating Recharge (\textsc{HoRR})}\normalsize}
  \label{a:scheduling}
  \begin{algorithmic}[1]
    \REQUIRE{$\cal N$, $f$, $g$, $c$.}
    \STATE Obtain \small$x=\frac{f-2g}{N}$\normalsize, where $N=\mid\mathcal{N}\mid$.
    \STATE Initially, one UAV is instructed to provide service at each of the $\cal N$ aerial  locations.
    \STATE After $x\!-\!g$ time units, a fully charged backup UAV $u_c$ takes off from the \emph{RS} and goes to the location of the UAV with less energy $u_e$ (breaking ties arbitrarily). \label{st:x-g}
    \STATE When $u_c$ arrives at the location of $u_e$, it replaces $u_e$ and $u_e$ goes to recharge.  Once $u_e$ is recharged it will be considered as a backup UAV.
    \label{st:x}
    \STATE Go back to Step~\ref{st:x-g}.
  \end{algorithmic}
\end{algorithm}
\setlength{\textfloatsep}{15.5pt}

\begin{figure*}
    \centering
    \vspace{1mm}
    \includegraphics[trim={5mm 3mm 4mm 3mm}, width=0.65\linewidth]{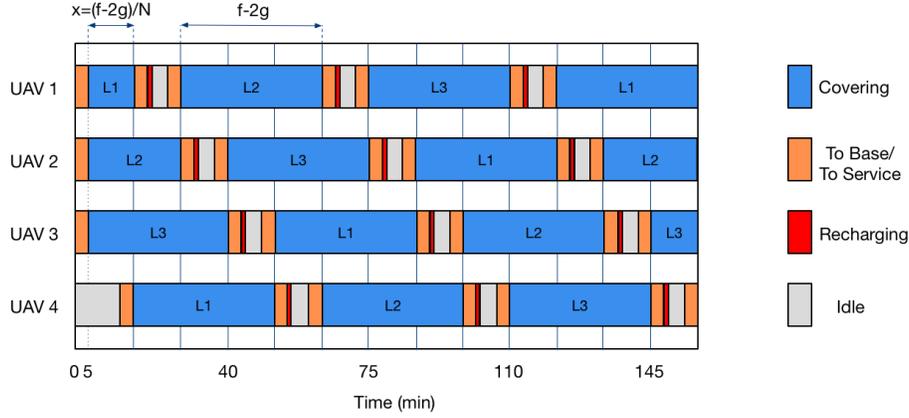}
    \vspace{-0mm}
    \caption{UAV recharge scheduling by using \textsc{HoRR} with $N=3$, $f=45$~min, $g=5$~min, $c=15$~s.}
    \vspace{-2mm}
    \label{fig:ODRSexample}
\end{figure*}

Note that \textsc{HoRR} assumes that there will always be a backup UAV ready to replace any other UAV instructed to recharge. In the following theorem we prove that, by using \textsc{HoRR}, the number of backup UAVs that guarantees that each location in $\cal N$ is permanently covered is \small$\left\lceil\frac{c+2g}{f-2g}N\right\rceil$\normalsize\!.

\begin{theorem}
Assume a fleet of UAVs that provide service in a homogeneous scenario so that the resulting system is characterized by $f$, $c$ and $g$. \textsc{HoRR} guarantees that $N$ locations can be permanently covered by using \small$M=N+  \left\lceil \frac{c+2g}{f-2g}N \right\rceil$\normalsize ~UAVs.
\label{the:HoRR}
\end{theorem}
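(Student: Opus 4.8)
The plan is to establish two things: that the schedule \textsc{HoRR} produces keeps all $N$ locations continuously covered with no UAV depleting its battery in flight, and that doing so occupies exactly $M$ UAVs at every instant. First I would pin down the periodic structure of the schedule. With $x=\frac{f-2g}{N}$, the swaps of Step~\ref{st:x} take place at the instants $x,2x,3x,\dots$, and each is preceded $g$ time units earlier by the take-off of its replacing backup (Step~\ref{st:x-g}); since exactly one serving UAV leaves and exactly one backup arrives per window of length $x$, at all times there are exactly $N$ UAVs on location, one per location. The key structural claim is that ``replace the UAV with the lowest energy'' realizes a first-in first-out rotation: all serving UAVs drain energy at the same rate and each reached its location straight from the \emph{RS}, so their energy order coincides with their arrival order, whence the UAV leaving at the $k$-th swap is the one that arrived at the $(k-N)$-th swap. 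Consequently every UAV serves for exactly $Nx=f-2g$ consecutive time units.

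Granting the above, energy feasibility and coverage are direct. A UAV's duty cycle is: take off from the \emph{RS} fully charged, fly for $g$, serve for $f-2g$, fly back for $g$ — a total airborne time of exactly $f$ — so it touches down at the \emph{RS} just as the battery empties and never before (this is where $2g<f$ is used). Coverage holds because the incoming backup, having taken off $g$ units earlier, reaches the location at the very instant the outgoing UAV leaves it, so the location is occupied at every time. The only remaining obligation — flagged just before the statement — is that a recharged backup is always available when Step~\ref{st:x-g} calls for one, and this is what the count below certifies.

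For the count I would follow one UAV from the swap $k$ at which it is sent to recharge until the swap $k'$ at which it returns to a location: it needs $g$ to fly home, $c$ to recharge, and $g$ to fly out, and the earliest take-off it can join afterwards is the first instant of the form $jx-g$ that is at least $kx+g+c$; this yields $k'=k+\left\lceil\frac{2g+c}{x}\right\rceil$, and it cannot be any earlier. Because exactly one UAV enters this ``recharge pipeline'' and one leaves it at each swap, the pipeline holds exactly $\left\lceil\frac{2g+c}{x}\right\rceil$ UAVs at every instant; adding the $N$ UAVs in service and substituting $x=\frac{f-2g}{N}$ gives precisely $M=N+\left\lceil\frac{c+2g}{f-2g}N\right\rceil$. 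To see that this many suffice, I would initialize the $M-N$ spares as fully charged backups so the first swaps proceed, and then invoke the displayed identity to check that the UAV which left at swap $j-\left\lceil\frac{2g+c}{x}\right\rceil$ is back and ready exactly in time for the take-off preceding swap $j$; the same bookkeeping shows \textsc{HoRR} cannot manage with fewer UAVs, since at any instant the UAVs that departed at the previous $\left\lceil\frac{2g+c}{x}\right\rceil$ swaps are all still in the pipeline.

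I expect the main obstacle to be making the first-in first-out claim watertight over the entire execution and not merely in the periodic regime: during start-up the energies of the originally deployed UAVs must be consistent with the intended rotation order — it suffices to regard them as having just arrived from the \emph{RS} — and ties between equal-energy UAVs must be resolved by arrival time. The attendant bookkeeping around the $g$-offset of the take-offs and the ceiling in $\left\lceil\frac{2g+c}{x}\right\rceil$ (in particular confirming it is not off by one, and handling the case $x<g$ where several backups are airborne during start-up) is the delicate part; once the transient is shown to merge into the periodic schedule, the rest is routine.
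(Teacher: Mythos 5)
Your proposal is correct and follows essentially the same route as the paper's own proof: the core of both arguments is the pipeline count showing that a UAV recalled at a swap instant needs $2g+c$ time before it can rejoin service, so that exactly $\left\lceil\frac{2g+c}{x}\right\rceil=\left\lceil\frac{c+2g}{f-2g}N\right\rceil$ UAVs are simultaneously ``instructed to recharge and not yet back,'' which added to the $N$ serving UAVs gives $M$. The extra care you take with the FIFO/lowest-energy rotation, the exact-$f$ airborne time, and the start-up transient fills in details the paper leaves implicit, but it does not change the argument.
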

\begin{proof}
According to Algorithm~\ref{a:scheduling}, at each time instant $kx$ (with $k\!\in\!\mathds{N}$) a UAV $u_e$ is instructed to recharge, so that at time instant $kx\!-\!g$ a backup UAV $u_c$ takes off to replace $u_e$ just on time. After that, it will take at least $c+2g$ time units for $u_e$ to be back and replace another UAV called back for recharging. During that interval, exactly $n=\left\lfloor\frac{c+2g}{x}\right\rfloor$ UAVs will be instructed to recharge, at intervals of $x=\frac{f-2g}{N}$ units after $kx$.
If the ratio $\frac{c+2g}{x}$ is integer, $u_e$ will be used for the $n$-th replacement, otherwise it will be used for the $(n+1)$-th replacement. In both cases, the number of UAVs instructed to recharge and not yet back to service is exactly $\left\lceil\frac{c+2g}{x}\right\rceil = \left\lceil\frac{c+2g}{f-2g}N\right\rceil$. Hence, this is also the number of backup UAVs needed by \textsc{HoRR}, and the proof follows.
\end{proof}

In Figure~\ref{fig:ODRSexample}, we show an illustrative example of how the \textsc{HoRR} algorithm works. We consider a scenario formed by three locations (i.e., $N\!=\!3$) and with $f=45$~min, $g=5$~min and $c\!=\!15$~s. Under those premises, Theorem~\ref{the:HoRR} guarantees that only one additional UAV is strictly necessary to guarantee a persistent coverage at the three locations (i.e., $M\!=\!4$).
Thus, every $x\!=\!11.\overline{6}$~min, one active UAV is instructed to recharge, and $5$~min in advance a fully charged backup UAV is also instructed to fly and replace that UAV. Observe also that, at the regime level of the scheduling (i.e., after the second recharge since the initial deployment of UAVs with full batteries)  each UAV provides service for $f\!-\!2g\!=\!35$~min.

\subsection{Recharging in the \textsc{HoRR} algorithm}

Next, we provide two results regarding when UAVs are instructed to recharge.

\begin{lemma}
\label{l:instruction}
By using \textsc{HoRR}, a UAV covering a location $i\!\in\!\mathcal{N}$ is instructed to recharge for the $k$-th time at time instant:

  \vspace{-3mm}
  \small
  \begin{eqnarray}
    t_i^k = \left( i + (k\!-\!1)N + (k\!-\!1) \left\lceil\frac{2g\!+\!c}{x}\right\rceil \right) x, \nonumber
  \end{eqnarray}
  \vspace{-3mm}
  \normalsize

  \noindent
  where \small$x=\frac{f-2g}{N}$\normalsize.
\end{lemma}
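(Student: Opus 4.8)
The plan is to prove the formula by induction on $k$, tracking the exact schedule of recharge instructions issued by \textsc{HoRR}. The base case and the inductive step both rest on one combinatorial observation: between one full ``sweep'' of the $N$ active locations and the next, exactly $\left\lceil\frac{2g+c}{x}\right\rceil$ extra recharge slots are consumed by UAVs that are out of service (displacing and recharging), so that the index of a location returns to the front of the queue only after $N + \left\lceil\frac{2g+c}{x}\right\rceil$ slots of length $x$.

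First I would set up notation: label the slots by $s=1,2,3,\dots$, where in slot $s$ the UAV currently holding the least energy is instructed to recharge at time $sx$ (with its replacement dispatched at $sx-g$, per Steps~\ref{st:x-g}--\ref{st:x}). Initially the $N$ locations are occupied by $N$ UAVs with full batteries, ordered by the tie-breaking rule so that location $i$ is the $i$-th to be asked to recharge; hence location $i$ is served in slot $s=i$, giving $t_i^1 = i x$, which matches the formula at $k=1$. The key structural claim to establish is: \emph{if location $i$ is served in slot $s$, then the next time location $i$ is served is slot $s + N + \lceil (2g+c)/x\rceil$.} Granting this claim, the formula follows immediately, since $t_i^{k} = \big(i + (k-1)(N + \lceil(2g+c)/x\rceil)\big)x$, which is exactly the stated expression once we write $\lceil(2g+c)/x\rceil$ in the displayed form.

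To prove the structural claim I would argue as in the proof of Theorem~\ref{the:HoRR}: once a UAV $u_e$ is instructed to recharge in slot $s$, it is unavailable (flying back, recharging, and waiting as a backup) until it is itself used as the replacement in slot $s + n$ or $s + n + 1$, where $n = \lfloor (2g+c)/x\rfloor$; in either case, counting slots from $s$, exactly $\lceil (2g+c)/x\rceil$ recharge instructions are issued before $u_e$ re-enters active service, and it re-enters at a location — but then one must check it re-enters precisely at location $i$ and then stays until it is again the lowest-energy UAV. Here the homogeneity of the scenario is essential: every active UAV drains at the same rate, so the cyclic order in which UAVs are picked as ``least energy'' is preserved, and a UAV that left from position $i$ in the cycle comes back into the same rotational position. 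Therefore, after returning to service it is the $N$-th UAV (counting the other $N-1$ currently-active UAVs plus the backups ahead of it in the queue) to be instructed to recharge, so its next service slot is $s + N + \lceil(2g+c)/x\rceil$ — but care is needed because the ``$N$ others ahead'' and the ``$\lceil(2g+c)/x\rceil$ out-of-service'' counts must not double-count; I would make this precise by a careful bookkeeping of the FIFO queue of backups together with the round-robin over active slots.

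I expect the main obstacle to be exactly this bookkeeping: showing rigorously that the composition of ``round-robin over the $N$ active positions'' with ``FIFO re-insertion of recharged UAVs after a fixed delay of $\lceil(2g+c)/x\rceil$ slots'' is equivalent to a single rotation with period $N + \lceil(2g+c)/x\rceil$, and in particular that a UAV leaving location $i$ re-enters at location $i$ (rather than some permuted location). The cleanest way to nail this down is to model the entire process as a single circular queue of length $M = N + \lceil(2g+c)/x\rceil$ UAVs, where each slot-$x$ tick advances a pointer by one position; the $N$ positions currently ``at the front'' are the active ones and the remaining $\lceil(2g+c)/x\rceil$ are the out-of-service ones, and $t_i^k$ is then just the $k$-th time the pointer lands on the position initially assigned to location $i$. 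Once that equivalence is argued (invoking the delay count $\lceil(2g+c)/x\rceil$ justified in Theorem~\ref{the:HoRR}), the formula is immediate and the induction collapses to a one-line computation.
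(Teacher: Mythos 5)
Your induction skeleton, the base case $t_i^1=ix$, and the final arithmetic are the right shape, but the claim you build the proof on is mis-stated and, as stated, false. You phrase the key step as ``if location $i$ is served in slot $s$, the next time location $i$ is served is slot $s+N+\lceil(2g+c)/x\rceil$.'' Under \textsc{HoRR} the UAV holding a \emph{fixed location} is swapped every $N$ slots (per-location period $Nx=f-2g$, visible in Figure~\ref{fig:ODRSexample}); it cannot be $\left(N+\lceil(2g+c)/x\rceil\right)x$, since a UAV arriving at a location can hold it for at most $f-2g=Nx$ before it must leave. The quantity $\left(N+\lceil(2g+c)/x\rceil\right)x$ is the recharge period of a fixed \emph{UAV}, and Lemma~\ref{l:instruction} indexes UAVs by their initial location assignment (the paper's proof reads it exactly this way: ``assume $u$ is the UAV instructed to cover the $i$-th location in the 1st round''), not recharge events at location $i$.

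Worse, the step you yourself flag as the main obstacle---showing that a UAV leaving location $i$ re-enters service at location $i$---is both unnecessary for the lemma and false in general. In the paper's own example ($N=3$, $\lceil(2g+c)/x\rceil=1$), the UAV recharged at slot $s$ is redeployed at slot $s+1$, i.e., at the \emph{next} location in the round-robin, so the UAV that leaves location $1$ comes back at location $2$; it returns to its original location only when $N$ divides $\lceil(2g+c)/x\rceil$. Since your argument routes through this claim and you leave its ``bookkeeping'' unresolved, the proposal does not close the proof. The paper avoids the issue entirely: from $t_i^k$ (an integer multiple of $x$), the recharged UAV is redeployed at the first slot $j$ with $jx\ge t_i^k+2g+c$, so $j=t_i^k/x+\lceil(2g+c)/x\rceil$ (the ceiling splits because $t_i^k/x$ is an integer), and---wherever it is sent---it becomes the least-energy active UAV exactly $N$ slots after redeployment, giving $t_i^{k+1}=(j+N)x$ with no reference to which location it now serves. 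If you recast your circular-queue picture purely in terms of UAVs (active for $N$ slots, out of service for $\lceil(2g+c)/x\rceil$ slots) and drop the same-location requirement, your argument collapses into the paper's proof.
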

\begin{proof}
  We prove the lemma by induction. Take $k=1$. Without loss of generality, assume that $u$ is the UAV instructed to cover the $i$-th location in the 1st round (otherwise, UAVs can be resorted). Then:

  \small
  \vspace{-4mm}
  \begin{eqnarray}
    t_i^1 = ix, \nonumber
  \end{eqnarray}
  \normalsize
  \vspace{-6mm}

  \noindent
  which satisfies the lemma.

  Assume the lemma is true for a given $k$. We prove that then the lemma is also true for $k\!+\!1$.

  According to the inductive hypothesis, $u$ is instructed to recharge for the $k$-th time at:

  \small
  \vspace{-3mm}
  \begin{eqnarray}
    t_i^k = \left( i + (k\!-\!1)N + (k\!-\!1) \left\lceil\frac{2g\!+\!c}{x}\right\rceil \right) x. \nonumber
  \end{eqnarray}
  \normalsize
  \vspace{-2mm}

  Then, $u$ arrives to the \emph{RS}  at time \small$t_i^k\!+\!g$\normalsize~and takes off at \small$t_i^k\!+\!g\!+\!c$\normalsize~(i.e., after it is fully recharged). This means that $u$ can replace another UAV at time \small$t_i^k\!+\!2g\!+\!c$\normalsize~or later.

  Following the scheduling, $u$ will replace another UAV at instant $jx$, for some $j\!\in\!\mathds{N}$. Concretely, it will do it at the minimum time instant $jx$ such that \small$jx\!\geq\! t_i^k\!+\!2g\!+\!c$\normalsize. Hence:

  \small
  \vspace{-3mm}
  \begin{align}
    j & \!=\! \left\lceil\frac{t_i^k+2g+c}{x}\right\rceil
    \!=\! \left\lceil i + (k\!-\!1)N \!+\! (k\!-\!1)\!\left\lceil\frac{2g+c}{x}\right\rceil\!+\!\frac{2g+c}{x}\right\rceil \nonumber\\
    & = i+(k\!-\!1)N + (k\!-\!1)\left\lceil\frac{2g+c}{x}\right\rceil+\left\lceil\frac{2g+c}{x}\right\rceil \nonumber\\
    & = i + (k\!-\!1)N + k\left\lceil\frac{2g+c}{x}\right\rceil\!\!. \nonumber
  \end{align}
  \normalsize
  \vspace{-2mm}

Then, after $Nx$ time units, $u$ will be instructed to recharge again for the ($k\!+\!1$)-th time at time instant:

  \small
  \vspace{-4mm}
  \begin{align}
    & \!\!\!t_i^{k+1} = jx+Nx
    = \left(i+(k-1)N+k\left\lceil\frac{2g+c}{x}\right\rceil\right)x+Nx \nonumber\\
    &
    = \left(i + kN + k \left\lceil\frac{2g + c}{x}\right\rceil\right) x, \nonumber
  \end{align}
  \normalsize
  \vspace{-3mm}

 which proves the lemma.
\end{proof}

\begin{corollary}
By using \textsc{HoRR}, any UAV is instructed to recharge every \small$\left(N \!+\! \left\lceil\frac{2g+c}{x}\right\rceil\right)\!x$ \normalsize time units.
  \label{cor:optimal}
\end{corollary}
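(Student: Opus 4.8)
The plan is to derive the corollary directly from Lemma~\ref{l:instruction}, which gives a closed form for the $k$-th recharge instant $t_i^k$ of a UAV covering location $i$. The statement to prove is that the time elapsed between two consecutive recharge instructions for any UAV is constant and equal to $\left(N + \left\lceil\frac{2g+c}{x}\right\rceil\right)x$. Since Lemma~\ref{l:instruction} already assigns each UAV (identified by the location $i$ it covers in a given round, up to relabelling) an explicit formula for $t_i^k$, the natural approach is simply to compute the difference $t_i^{k+1} - t_i^k$.

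First I would write out $t_i^{k+1}$ and $t_i^k$ from the lemma:
\begin{align}
t_i^{k+1} &= \left( i + kN + k\left\lceil\tfrac{2g+c}{x}\right\rceil \right)x, \nonumber\\
t_i^{k} &= \left( i + (k\!-\!1)N + (k\!-\!1)\left\lceil\tfrac{2g+c}{x}\right\rceil \right)x. \nonumber
\end{align}
Subtracting, the $i$ terms cancel, and the coefficients of $N$ and of $\left\lceil\frac{2g+c}{x}\right\rceil$ each drop by one, so
\begin{equation}
t_i^{k+1} - t_i^{k} = \left( N + \left\lceil\tfrac{2g+c}{x}\right\rceil \right)x, \nonumber
\end{equation}
which is independent of both $i$ and $k$. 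This establishes that the inter-recharge interval is the claimed constant for every UAV and for every pair of consecutive recharge events.

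The one subtlety worth a sentence of justification is the phrase ``any UAV'': Lemma~\ref{l:instruction} is stated for a UAV that covers location $i$, but under \textsc{HoRR} every UAV cycles through the role of covering some location and then serving as a backup, so after relabelling (as the lemma's proof already notes, ``UAVs can be resorted'') the formula applies to each of the $M$ UAVs in the fleet. Hence the computed difference applies uniformly, and the corollary follows. I do not expect any real obstacle here — the result is an immediate algebraic consequence of the lemma, and the only care needed is the bookkeeping observation that the index $i$ labelling ``which location'' does not affect the difference.
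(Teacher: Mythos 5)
Your proposal is correct and follows essentially the same route as the paper: subtract the closed-form expressions for $t_i^{k+1}$ and $t_i^k$ from Lemma~\ref{l:instruction} and observe that the difference is $\left(N + \left\lceil\frac{2g+c}{x}\right\rceil\right)x$, independent of $i$ and $k$. Your extra remark about relabelling UAVs so the formula applies to every UAV in the fleet is a harmless clarification of a point the paper leaves implicit.
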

\begin{proof}
The difference between two consecutive recharges at the same location $i$ is given by

\small
  \vspace{-4mm}
  \begin{align}
      \! t_i^{k+1} - t_i^k  & = \left(i + kN + k\left\lceil\frac{2g+c}{x}\right\rceil\right)x  \,-\, \bigg(i + (k\!-\!1)N  \nonumber\\
    & +  \left . (k\!-\!1)\left\lceil\frac{2g+c}{x}\right\rceil\right)x = \left(N + \left\lceil\frac{2g+c}{x}\right\rceil\right)x, \nonumber
  \end{align}
  \vspace{-3mm}
  \normalsize

  and hence the corollary follows.
\end{proof}

\subsection{Optimality of the \textsc{HoRR} algorithm}
\label{ss:optimal}


In the following theorem, we show which is the strictly minimum number of UAVs necessary to guarantee that a given set of locations are covered in a persistent manner.

\begin{theorem}
Assume a fleet of UAVs that provide service in a homogeneous scenario so that the resulting system is characterized by $f$, $c$ and $g$. The minimum number of UAVs necessary to guarantee that $N$ of them will be always providing service is \small $M=N+ \left\lceil \frac{c+2g}{f-2g}N \right\rceil$\normalsize.
\label{th:optimal}
\end{theorem}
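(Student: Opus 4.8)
The plan is to prove the theorem as a matching pair of bounds. The upper bound is immediate: Theorem~\ref{the:HoRR} already exhibits a feasible schedule, namely \textsc{HoRR}, that keeps all $N$ locations permanently covered using exactly $M=N+\left\lceil\frac{c+2g}{f-2g}N\right\rceil$ UAVs, so the minimum is at most $M$. Hence the real work is the lower bound: I would argue that no feasible recharge schedule whatsoever, over any fleet, can keep the $N$ locations covered with fewer than $M$ UAVs. I would obtain this through an amortized ``duty-cycle'' counting argument over a long horizon $[0,T]$, comparing the total amount of covering time that must be delivered with the total amount that a fleet of a given size can possibly deliver. On the demand side: in any feasible schedule each of the $N$ locations is covered at every instant of $[0,T]$ by some UAV, and a UAV covers at most one location at a time, so the total covering time accumulated by the whole fleet over $[0,T]$ is at least $NT$.

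On the supply side I would bound the covering time of a single UAV. Split its timeline over $[0,T]$ into \emph{flight phases} (airborne, between two consecutive visits to the \emph{RS}) and \emph{ground phases} (at the \emph{RS}). Within one flight phase the airborne time is at most $f$, the UAV must spend at least $g$ flying away from the \emph{RS} and at least $g$ flying back to it, so its covering time in that phase is at most $f-2g$; moreover that phase has length at least (its covering time) $+\,2g$, and any two consecutive flight phases are separated by a ground phase of length at least $c$. Writing $k$ for the number of complete flight phases the UAV performs inside $[0,T]$ and $\sigma$ for its total covering time over those phases, these observations give $\sigma\le k(f-2g)$ and $\sigma+k(2g+c)\le T+c$; eliminating $k$ yields $\sigma\le (T+c)\,\frac{f-2g}{f+c}$. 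Adding an $O(1)$ term to absorb the at most two truncated phases at the ends of $[0,T]$, each UAV supplies at most $(T+c)\,\frac{f-2g}{f+c}+2f$ units of covering time.

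Combining the two sides, a fleet of $M$ UAVs must satisfy $NT\le M\!\left((T+c)\,\frac{f-2g}{f+c}+2f\right)$; dividing by $T$ and letting $T\to\infty$ gives $N\le M\,\frac{f-2g}{f+c}$, i.e. $M\ge N\,\frac{f+c}{f-2g}=N+N\,\frac{2g+c}{f-2g}$. Since $M$ and $N$ are integers, this forces $M\ge N+\left\lceil\frac{c+2g}{f-2g}N\right\rceil$. Together with the upper bound witnessed by \textsc{HoRR}, we conclude $M=N+\left\lceil\frac{c+2g}{f-2g}N\right\rceil$, as claimed.

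The step I expect to be the genuine obstacle is making the single-UAV supply bound airtight against \emph{every} admissible behaviour, rather than only against the clean ``one UAV per location'' schedules. The bound must survive schedules in which a UAV visits several locations inside a single flight phase, loiters idle in the air or on the ground, is already airborne at time $0$ or still airborne at time $T$, or attempts partial recharges. The two facts that close these gaps are that (i) all transit times are non-negative, so the $2g$ of unavoidable departure/return transit and the $c$ of recharge cannot be compressed no matter how many locations one trip serves, and (ii) idling and partial charging can only lower the achievable covering-to-cycle ratio, while the boundary effects at $0$ and $T$ are a fixed additive cost that is washed out in the limit $T\to\infty$.
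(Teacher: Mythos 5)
Your proposal is correct, and both directions are handled properly: the upper bound by invoking the feasibility of \textsc{HoRR} (Theorem~\ref{the:HoRR}), and the lower bound by an argument that ends at the same fractional inequality $M\ge N\frac{f+c}{f-2g}$ as the paper before rounding. However, your lower-bound argument takes a genuinely different route. The paper fixes a single location $i$, models a UAV serving it as following a duty cycle of length $\phi+c+\alpha$ with covering time $\phi-2g$, maximizes the per-cycle covering fraction $r(\phi,\alpha)=\frac{\phi-2g}{\phi+c+\alpha}$ at $(\phi,\alpha)=(f,0)$ to conclude that at least $\frac{f+c}{f-2g}$ UAVs are needed per location ``on average,'' and then sums the fractional per-location backup counts $\frac{c+2g}{f-2g}$ over the $N$ locations and takes a single ceiling. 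You instead run a global amortized accounting over a horizon $[0,T]$: demand $\ge NT$ because each location is covered at every instant and a UAV covers at most one location at a time, supply per UAV $\le (T+c)\frac{f-2g}{f+c}+O(1)$ via the flight-phase/ground-phase decomposition, and then $T\to\infty$. What your approach buys is rigor against arbitrary admissible behaviour --- it never needs to associate a UAV with a location or to assume a well-defined per-location duty cycle, so schedules where UAVs rotate among locations, loiter, or straddle the horizon boundaries are covered explicitly, whereas the paper handles this only with an informal remark that a UAV ``could be used to provide service to a different location after each recharge.'' What the paper's version buys is brevity and a form that transfers directly to the heterogeneous lower bound of Theorem~\ref{th:LBvin}, where the per-location fractions $\frac{c+2g_i}{f-2g_i}$ differ and the global-averaging argument would need per-location bookkeeping anyway. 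The only caveat in your write-up is the aside about partial recharges: your phase decomposition assumes every ground phase lasts at least $c$, which is exactly the paper's model (recharge/replacement is an atomic $c$-long operation), so this is consistent rather than a gap, but it is an assumption of the model, not something your inequality derives.
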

\begin{proof}
Consider a UAV providing service at a given aerial location $i\!\in\!\mathcal{N}$.
Such a UAV can provide service to $i$ for $\phi-2g$ time units, with $\phi \in [2g, f]$, and follow a duty cycle lasting $\phi + c + \alpha$, with $\alpha \ge 0$ indicating for how long the UAV remains idle after recharging. The fraction of time dedicated to serve location $i$ is therefore  \small$r(\phi, \alpha)=\frac{\phi-2g}{\phi+c+\alpha}$\normalsize, which is maximized for $(\phi,\alpha) = (f,0)$. Therefore, the minimum number of UAVs needed to serve $i$ is $n$ such that $n \cdot r(f,0)= 1$, so that the location be covered $100\%$ of the time. This means that we need, at least, \small$n = \frac{f+c}{f-2g}$\normalsize~UAVs to cover one location. Hence, a lower bound for the number of backup UAVs dedicated to one location is $n -1= \frac{f+c}{f-2g}-1 = \frac{c+2g}{f-2g}$ (we subtract $1$, since the covering UAV is not counted as backup UAV).
With the above, we can obtain a non-integer lower bound, because we are considering the average behavior of UAVs, which could be used to provide service to a different location after each recharge using an optimal scheduler.
To cover $N$ locations, being the scenario homogeneous, we need at least $(n-1)\,N$ backup UAVs in total, and round this number to the next closer integer, i.e., $\left\lceil \frac{c+2g}{f-2g}N \right\rceil$\normalsize, which is therefore the lower bound under any scheduling scheme and homogeneous assumptions.

\end{proof}


The proof of the above theorem implies that \textsc{HoRR} is optimal because, according to Theorem~\ref{the:HoRR}, it uses the minimum possible number of backup UAVs.

\begin{corollary}
\label{cor:optimal}
\textsc{HoRR} is optimal.
\end{corollary}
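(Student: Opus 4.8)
The plan is to combine the two theorems already established. The final statement, Corollary~\ref{cor:optimal}, asserts that \textsc{HoRR} is optimal, meaning it uses the strictly minimum number of UAVs to guarantee persistent coverage of the $N$ aerial locations. Since optimality here is a statement about matching a lower bound with an achievable upper bound, the proof is essentially a one-line consequence of what precedes it, and the work has already been done.

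First I would recall that Theorem~\ref{the:HoRR} establishes feasibility: \textsc{HoRR} is a concrete scheduling that keeps all $N$ locations permanently covered using exactly $M = N + \left\lceil \frac{c+2g}{f-2g} N \right\rceil$ UAVs. This gives an \emph{upper bound} of $M$ on the optimal fleet size. Next I would invoke Theorem~\ref{th:optimal}, which shows that \emph{any} scheduling scheme under the homogeneous assumptions requires at least $\left\lceil \frac{c+2g}{f-2g} N \right\rceil$ backup UAVs in addition to the $N$ UAVs actively providing service, i.e. at least $M$ UAVs total. This gives a matching \emph{lower bound}. Since \textsc{HoRR} attains a fleet size equal to the lower bound, no scheduling can do better, and therefore \textsc{HoRR} is optimal. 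That is the whole argument.

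The main (and only) obstacle is a bookkeeping one: making sure the quantity in Theorem~\ref{the:HoRR} (the number of backup UAVs that \textsc{HoRR} actually uses, $\left\lceil \frac{c+2g}{x}\right\rceil = \left\lceil \frac{c+2g}{f-2g}N\right\rceil$) is literally identical to the lower bound in Theorem~\ref{th:optimal} ($\left\lceil \frac{c+2g}{f-2g}N\right\rceil$), so that the sandwich is tight with no off-by-one gap. Both expressions are written the same way after substituting $x = \frac{f-2g}{N}$, so this is immediate; I would just state it explicitly rather than leave it to the reader. No heavy calculation is required — the corollary is a sandwich argument, and I would present it as such in two or three sentences.

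\begin{proof}
By Theorem~\ref{the:HoRR}, \textsc{HoRR} is a feasible scheduling that keeps all $N$ locations permanently covered using $M = N + \left\lceil \frac{c+2g}{f-2g} N \right\rceil$ UAVs; hence the minimum fleet size is at most $M$. By Theorem~\ref{th:optimal}, under the homogeneous assumptions any scheduling scheme needs at least $N + \left\lceil \frac{c+2g}{f-2g} N \right\rceil = M$ UAVs to guarantee that $N$ of them are always providing service; hence the minimum fleet size is at least $M$. Therefore \textsc{HoRR} uses exactly the minimum possible number of UAVs, i.e., it is optimal.
\end{proof}
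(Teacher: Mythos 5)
Your proposal is correct and matches the paper's argument exactly: the paper also derives the corollary as an immediate sandwich, noting that the fleet size achieved by \textsc{HoRR} in Theorem~\ref{the:HoRR} coincides with the lower bound established in Theorem~\ref{th:optimal}. Your version merely spells out the upper-bound/lower-bound structure more explicitly, which is fine.
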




\subsection{Numerical analysis of the \textsc{HoRR} algorithm}
\label{ss:HoRR_analysis}

To end this section and through a numerical analysis of the result provided by Theorem~\ref{the:HoRR}, here we illustrate how the fleet size $M$ grows as a function of the number of locations to be covered, $N$. Figure~\ref{fig:HoRR_beh} shows that relationship for different values of $f$. We have also considered different values of both $g$ and $c$, and we observed that the shapes were similar.

The first observation is that the values of $M$ grow linearly with the values of $N$. This can be readily explained as follows: We know that $M=N+  \left\lceil \frac{c+2g}{f-2g}N \right\rceil$, which can be rewritten as $M/N = 1 + \left\lceil\frac{c+2g}{f-2g}N\right\rceil/N \simeq \frac{f+c}{f-2g}$. However, in any concrete scenario  the parameters $f$, $g$ and $c$ remain constant, since they model  features that do not change. Therefore, we have that $M$ linearly grows with $N$ at a rate of $\frac{f+c}{f-2g}$. At this point, we note that the steps that can be observed in the graph are due to rounding up the number of UAVs.

Another observation is that the higher the value of $f$, the lower the slope. Again, this can be explained since the value of $\frac{f+c}{f-2g}$ decreases with the increase in the value of $f$, until it reaches $1$ (which happens when $f$ is much larger than both $g$ and $c$.

\begin{figure}
\centering
\includegraphics[width=8cm]{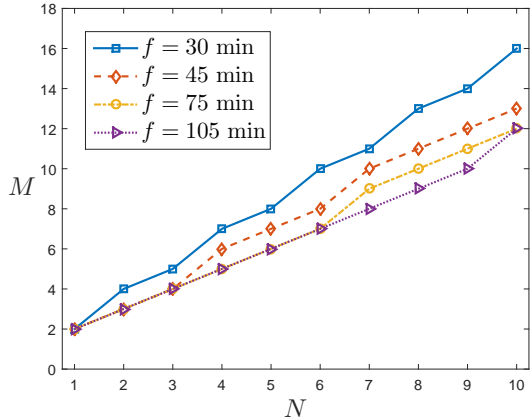}
\caption{Behavior of the {\sc HoRR} algorithm. $g=5$~min and $c=15$~s.}
\label{fig:HoRR_beh}
\end{figure}
%

\section{Heterogeneous Scenarios: NP-hardness and covering cost}
\label{s:hetero}

In this section, we address the case in which the distance between the \emph{RS} and each different location can be different. First, we show that, in that scenario,  the \emph{UPS} problem is NP-hard. Then, we prove that covering heterogeneous scenarios is, in general, more costly than covering homogeneous ones.

\subsection{NP-hardness}


\begin{theorem}
\label{the:het}
  The UPS problem in the heterogeneous case is \!NP-hard.
\end{theorem}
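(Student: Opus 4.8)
The plan is to exhibit a polynomial-time reduction from a known NP-hard problem to the decision version of the heterogeneous \emph{UPS} problem, where the decision version asks whether a given number $M$ of UAVs suffices to cover all of $\mathcal{N}$ persistently. The most natural candidate for the source problem is a number-partitioning flavored problem---either \textsc{Partition}, \textsc{3-Partition}, or \textsc{Bin Packing}---since the heterogeneous displacement times $g_i$ create an allocation structure: each backup UAV, over one duty cycle, has a fixed ``budget'' of service time it can hand out, and it must distribute that budget among the locations it rotates through. Given the paper's later emphasis on \emph{partitioning the set of service locations} into groups where distances are homogeneous, I would bet the intended reduction is from \textsc{3-Partition} (which is strongly NP-hard, hence rules out pseudo-polynomial algorithms and justifies the heuristic approach), encoding each of the $3m$ integers as the displacement time (or the per-cycle load) of a location, and choosing $f$, $c$, and the target fleet size so that a feasible schedule with exactly $M$ UAVs exists if and only if the $3m$ numbers can be split into $m$ triples of equal sum.

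The key steps, in order, are: (i) fix the decision version of \emph{UPS} and argue membership in NP is not required (NP-hardness only needs the reduction, and we need not claim the scheduling problem itself is in NP given its continuous-time nature); (ii) describe the \textsc{3-Partition} instance $a_1,\dots,a_{3m}$ with $\sum a_j = mB$ and the constraint $B/4 < a_j < B/2$; (iii) construct the \emph{UPS} instance: create $3m$ locations with displacement times $g_j$ (suitably affine in $a_j$), pick $f$ and $c$ so that a single recharged UAV's spare capacity per cycle equals exactly what is needed to service one ``triple-worth'' of load, and set the fleet size $M = 3m + m$ (or the appropriate count) so that exactly $m$ backup UAVs are available; (iv) forward direction: from a valid $3$-partition, build an explicit cyclic schedule---each backup UAV is assigned one triple and rotates through those three locations, \`a la \textsc{HeRR}, and one checks the timing constraints ($2g_j < f$, the cycle closes up, no location is ever uncovered) are met exactly because the triple sums are balanced; (v) reverse direction: from any feasible schedule with $M$ UAVs, extract a partition by a counting/averaging argument---the total service demand forces each backup UAV to carry a load of \emph{at least} the average, and the strict bounds $B/4 < a_j < B/2$ force each UAV to handle exactly three locations, which pins down the triples.

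I expect the reverse direction (step v) to be the main obstacle. In homogeneous scheduling the duty cycles are rigidly synchronized, but in a heterogeneous feasible schedule a single UAV could in principle visit locations in an irregular, non-periodic pattern, idle for varying amounts $\alpha \ge 0$ after each recharge, and interleave locations of different ``sizes'' in complicated ways; ruling out that such flexibility lets fewer than $m$ backups cover everything requires a clean invariant. The cleanest route is a time-averaged load argument in the spirit of the proof of Theorem~\ref{th:optimal}: over a long horizon $T$, location $i$ must be covered for all $T$ units, so it consumes service at rate $1$; each non-covering UAV contributes service at rate at most $\frac{f-2g_i}{f+c} < 1$ (maximized at $\phi=f$, $\alpha=0$) while it serves $i$; summing the rates and using the numeric choice of $f$ and $c$ shows the $m$ backups are saturated, and the granularity constraint $B/4 < a_j < B/2$ (translated into the $g_j$'s) is what forces the integral ``three per UAV'' structure rather than a fractional split. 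One technical subtlety to handle carefully is that, unlike the idealized lower bound in Theorem~\ref{th:optimal}, here the schedule must be genuinely realizable, so I would either work with a rational-time / discretized model or argue that an optimal schedule can be taken periodic without loss of generality, so that the averaging bound is actually achieved and the reduction is tight in both directions.
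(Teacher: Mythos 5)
Your plan goes by a direct reduction from \textsc{3-Partition} to the decision version of \emph{UPS}, whereas the paper never builds such a gadget: it restricts \emph{UPS} to $c=0$ and $2g_i<f/2$, invokes the known equivalence of this restriction with the \emph{MSDPM} problem with one recharging station and, via~\cite[Lemma~5.1]{8901097}, with the \emph{BMIDP} bin-packing variant, and then proves \emph{BMIDP} NP-hard in Appendix~\ref{App:NP-completeness} by a reduction from $k$-way number partitioning. The difference matters because the step your sketch leaves open is exactly the content of that cited equivalence lemma: turning a \emph{schedule} into a \emph{packing}. In your reverse direction you propose a time-averaged load argument ``in the spirit of Theorem~\ref{th:optimal}'' plus the $B/4<a_j<B/2$ bounds to force ``three locations per backup UAV.'' But the averaging bound (this is Theorem~\ref{th:LBvin}) only constrains the aggregate fleet size; it says nothing about how the backup capacity is distributed among locations. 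A feasible schedule need not dedicate any UAV to a fixed set of locations: coverage of a single location's recharge gaps can be time-shared among several UAVs across different cycles, UAVs can idle by varying amounts, and the pattern need not be periodic. Without a structural lemma showing that a minimum-fleet schedule can be normalized into one where each spare UAV serves a fixed, integral subset of locations (which is precisely what the \emph{MSDPM}$\,\equiv\,$\emph{BMIDP} equivalence provides, including the subtlety that the largest ``item'' of each group is counted twice), the granularity bounds on the $a_j$ have nothing integral to bite on, and the extraction of a 3-partition does not go through. You flag this yourself as the main obstacle, but the proposal does not supply the missing invariant, so as written the reduction is not established.

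A secondary, more repairable gap is the forward direction: the construction is only described as ``$g_j$ suitably affine in $a_j$'' with $f$, $c$, $M$ ``picked so that the spare capacity per cycle equals one triple-worth of load.'' The quantities that actually govern feasibility are per-location and non-additive in the obvious sense (a UAV serving location $i$ is unavailable for $c+2g_i$, can serve for at most $f-2g_i$, and the replacement timing depends on which location it moves to next, cf.\ Theorem~\ref{the:HeRR}), so you would need to exhibit explicit values and verify that the cyclic schedule closes up and respects $2g_j<f$; this is doable but is real work, not a one-line choice. If you want a self-contained proof along your lines, the cleanest fix is to do what the paper does implicitly: first prove a normalization/equivalence result reducing feasibility of \emph{UPS} (with $c=0$) to a packing condition, and only then run the number-partitioning reduction against that packing problem.
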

\begin{proof}

    Consider an instance of the \emph{UPS} problem such that $c\!=\!0$ and $2g_i\!<\!f/2$, $\forall 1\!\leq \!i\!\leq \!N$. This instance of the \emph{UPS} problem is equivalent to the general instance of the \emph{Minimal Spare Drone for Persistent Monitoring (MSDPM)} problem~\cite{8901097} with one \emph{RS}. At this point, we note that the \emph{MSDPM} problem is equivalent to the \emph{ Bin Maximum Item Double Packing (BMIDP)} problem (see~\cite[Lemma~5.1]{8901097}). In addition, the \emph{BMIDP} problem is NP-hard, as we formally prove in Appendix~\ref{App:NP-completeness}.
    Therefore, we have that the \emph{MSDPM} problem is NP-hard and, consequently, the \emph{UPS} problem is also NP-hard.
\end{proof}

This result shows that, contrary to what happens in the homogeneous case, in the heterogeneous one it is not possible to find an optimal scheduling that works in polynomial-time.


\subsection{Covering Cost}

In this subsection, we compare the covering cost, in terms of the number of UAVs, required by homogeneous scenarios against the cost required by heterogeneous ones. To do this, we first obtain a lower bound on the necessary number of UAVs  to guarantee that $N$ locations will be permanently covered.

\begin{theorem}
Assume a fleet of UAVs that provide service in a heterogenous scenario so that the resulting system is characterized by $f$, $c$ and $g_i$ (for each $i\!\in\!\mathcal{N}$). A lower bound on the minimum  number of UAVs necessary to guarantee that $N$ of them will be always providing service is
  $M_{LB} = N + \left\lceil\sum\limits_{i=1}^N \frac{c+2g_i}{f-2g_i}\right\rceil$.
  \nonumber
\label{th:LBvin}
\end{theorem}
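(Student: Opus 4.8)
The plan is to reproduce, location by location, the duty-cycle argument used in the proof of Theorem~\ref{th:optimal}, and then aggregate the per-location requirements into a single integer bound, taking care that in the heterogeneous case a backup UAV may be shared among several locations over time.

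First I would fix an arbitrary feasible schedule that uses a fleet of $M$ UAVs and reason about its long-run (amortised) behaviour over a window $[0,T]$ with $T\to\infty$. Repeating the computation of Theorem~\ref{th:optimal} for a single location $i$ but with displacement $g_i$: a UAV engaged on $i$ can serve it for $\phi-2g_i$ time units within a duty cycle of length $\phi+c+\alpha$, where $\phi\in[2g_i,f]$ is the battery actually consumed in the sortie and $\alpha\ge 0$ is the idle time after recharging; the fraction of time usefully spent on $i$ is $\frac{\phi-2g_i}{\phi+c+\alpha}$, maximised at $(\phi,\alpha)=(f,0)$. Equivalently, per unit of service delivered to $i$ the UAV pays at least $\frac{c+2g_i}{f-2g_i}$ units of ``dead'' time (displacement plus recharging). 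The key step is to check that this per-location accounting survives even when a single sortie visits more than one location: if a sortie serves locations $i_1,\dots,i_k$ for times $s_1,\dots,s_k$, then its travel time is at least $2\max_j g_{i_j}$ (a closed walk through the RS that visits $i_j$ has length at least $2g_{i_j}$), and $2\max_j g_{i_j}+\sum_j s_j\le f$; since $g\mapsto\frac{c+2g}{f-2g}$ is increasing on $[0,f/2)$, the dead time of the sortie, $(\text{travel})+c$, is at least $\bigl(\sum_j s_j\bigr)\frac{c+2g^{*}}{f-2g^{*}}\ge\sum_j s_j\,\frac{c+2g_{i_j}}{f-2g_{i_j}}$ with $g^{*}=\max_j g_{i_j}$.

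Summing this inequality over all sorties of all UAVs, and using that each location $i$ is served by exactly one UAV at every instant (so its total service time over $[0,T]$ is $T$), the total dead time of the fleet is at least $T\sum_{i=1}^{N}\frac{c+2g_i}{f-2g_i}$, while its total on-station time is $NT$. Since the total UAV-time is $MT$ and on-station, dead, and idle times partition it, this gives $M\ge N+\sum_{i=1}^N\frac{c+2g_i}{f-2g_i}$. Finally, $M$ and $N$ are integers, so $M-N$ is a nonnegative integer no smaller than $\sum_{i=1}^N\frac{c+2g_i}{f-2g_i}$, hence $M-N\ge\bigl\lceil\sum_{i=1}^N\frac{c+2g_i}{f-2g_i}\bigr\rceil$, i.e.\ $M\ge M_{LB}$. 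Note that the ceiling must be applied to the whole sum rather than term by term, precisely because one backup UAV can be reassigned to different locations across recharge cycles; a per-term ceiling would yield a strictly stronger, and in general invalid, bound.

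I expect the main obstacle to be making the amortised argument watertight: the per-sortie inequality above is the technical heart, and one must also control the partial sorties straddling the endpoints of $[0,T]$ --- their contribution is $O(1)$ and vanishes as $T\to\infty$, or one may simply assume the schedule is eventually periodic, exactly as in the reasoning behind Theorems~\ref{the:HoRR} and~\ref{th:optimal}. Everything else --- the single-location duty-cycle optimisation and the monotonicity of $\frac{c+2g}{f-2g}$ --- is routine.
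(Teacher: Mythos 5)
Your proposal is correct, and its core is the same decomposition the paper uses: each unit of service delivered to location $i$ costs at least $\frac{c+2g_i}{f-2g_i}$ units of overhead (displacement plus recharge), these per-location overhead rates are summed over $\mathcal{N}$, and a single ceiling is applied at the end --- exactly the structure of the paper's proof, which transplants the duty-cycle fraction $\frac{f-2g_i}{f+c}$ from Theorem~\ref{th:optimal} to each location and then ``sums over all locations and takes the ceiling.'' Where you go beyond the paper is in rigor and generality: the paper's argument reasons about a UAV dedicated to one location per duty cycle and treats the summation informally, whereas you make the amortised accounting explicit over a horizon $[0,T]$ with $T\to\infty$, justify the single global ceiling by the integrality of $M-N$ (and correctly note that a per-term ceiling would be an invalid, stronger claim, since backups are shared across locations --- cf.\ the homogeneous case), and, most substantively, you verify that the bound survives sorties visiting several locations via the inequality $\mathrm{travel}+c \ge 2g^*+c \ge \bigl(\sum_j s_j\bigr)\frac{c+2g^*}{f-2g^*} \ge \sum_j s_j\,\frac{c+2g_{i_j}}{f-2g_{i_j}}$, using monotonicity of $g\mapsto\frac{c+2g}{f-2g}$ on $[0,f/2)$. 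That per-sortie step is sound (it follows from $\sum_j s_j\le f-2g^*$) and closes a case the paper's proof silently ignores; the price is the extra bookkeeping of boundary sorties, which, as you say, contributes $O(1)$ per UAV and vanishes after dividing by $T$. In short: same bound, same key ratio, but your route yields the lower bound for a strictly larger class of schedules than the paper's sketch explicitly covers.
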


\begin{proof}
Similarly to what argued in the proof of Theorem~\ref{th:optimal}, given an aerial location $i\!\in\!\mathcal{N}$, a UAV provides service to $i$ for a fraction of its
duty-cycle time, and such fraction can reach the maximum value of $\frac{f-2g_i}{f+c}$.
This means that we need, at least, \small$\frac{f+c}{f-2g_i}$\normalsize~UAVs to cover that location $i$. Hence, a lower bound for the number of backup UAVs for location $i$ is $\frac{f+c}{f-2g_i}-1 = \frac{c+2g_i}{f-2g_i}$, which gives an average value.

Summing over all possible aerial locations, and taking the ceiling, we get a lower bound of the total number of backup UAVs, and the theorem follows.
%
\end{proof}


%

Now, we can use the obtained lower bound to show that covering in heterogeneous scenarios is, in general, more costly than covering in homogeneous ones.

\begin{theorem}
Assume a fleet of UAVs that provide service in a heterogenous scenario so that the resulting system is characterized by $f$, $c$ and $g_i$ (for each $i\!\in\!\mathcal{N}$). Let $M_{het}$ be the minimum number of UAVs that guarantee that $N$ of them are always providing service at the target locations, and let $M_{hom}$ be the minimum number of UAVs that guarantee that $N$ of them are always providing service in the homogeneous scenario when $g=Avg(g_i)$.  Then,
 $M_{het} \geq M_{hom}$.
 \label{the:LBhomo}
\end{theorem}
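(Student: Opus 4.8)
The plan is to show that the lower bound $M_{LB}$ for the heterogeneous scenario from Theorem~\ref{th:LBvin}, when specialized to the homogeneous case $g_i \equiv g = \mathrm{Avg}(g_i)$, is dominated by the exact count $M_{hom} = N + \left\lceil \frac{c+2g}{f-2g} N \right\rceil$ from Theorem~\ref{th:optimal}, but in the wrong direction — so instead I would argue directly via convexity. The key observation is that both $M_{het} \ge M_{LB}^{het} = N + \left\lceil \sum_{i=1}^N \frac{c+2g_i}{f-2g_i} \right\rceil$ and $M_{hom} = N + \left\lceil \sum_{i=1}^N \frac{c+2g}{f-2g} \right\rceil$ with $g = \frac{1}{N}\sum_i g_i$ (rewriting the homogeneous count as a sum of $N$ identical terms). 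Since the ceiling function is monotone, it suffices to prove the inequality at the level of the sums inside, i.e.
\begin{equation}
\sum_{i=1}^N \frac{c+2g_i}{f-2g_i} \;\ge\; \sum_{i=1}^N \frac{c+2g}{f-2g}, \qquad g = \frac{1}{N}\sum_{i=1}^N g_i. \nonumber
\end{equation}

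First I would define the single-variable function $h(t) = \frac{c+2t}{f-2t}$ on the admissible interval $t \in [0, f/2)$ (recall $2g_i < f$ for all $i$), and show it is convex there. A quick way: write $h(t) = -1 + \frac{f+c}{f-2t}$, so $h''(t) = \frac{4(f+c)}{(f-2t)^3} > 0$ since $f, c > 0$ and $f - 2t > 0$ on the domain. Hence $h$ is strictly convex. Then Jensen's inequality gives $\frac{1}{N}\sum_{i=1}^N h(g_i) \ge h\!\left(\frac{1}{N}\sum_{i=1}^N g_i\right) = h(g)$, which is exactly the displayed sum inequality after multiplying by $N$.

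Next I would chain the inequalities: $M_{het} \ge M_{LB}^{het} = N + \left\lceil \sum_i h(g_i) \right\rceil \ge N + \left\lceil \sum_i h(g) \right\rceil = N + \left\lceil \frac{c+2g}{f-2g} N \right\rceil = M_{hom}$, where the middle step uses monotonicity of $\lceil \cdot \rceil$ together with the Jensen bound, and the last equality is the statement of Theorem~\ref{th:optimal} applied to the homogeneous instance with distance $g$. This yields $M_{het} \ge M_{hom}$ as claimed.

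The main obstacle — really the only subtlety — is making sure the comparison is between the right quantities: Theorem~\ref{th:LBvin} only gives a \emph{lower bound} on $M_{het}$, not its exact value, so the argument must route through $M_{LB}^{het}$ and never needs the (unknown, NP-hard) exact $M_{het}$; conversely in the homogeneous case we do have the exact value $M_{hom}$ from Theorem~\ref{th:optimal}, and it happens to coincide with the homogeneous specialization of the lower-bound formula, which is what lets the chain close. One should also note the inequality can be strict whenever the $g_i$ are not all equal (strict convexity), matching the paper's informal claim that heterogeneity is "in general more costly," though a non-strict statement is all that is asserted. No heavy computation is needed beyond the one-line second-derivative check.
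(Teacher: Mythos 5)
Your proof is correct, and it follows the paper's overall skeleton — route through the lower bound of Theorem~\ref{th:LBvin}, compare with the exact homogeneous count of Theorem~\ref{th:optimal}, and reduce everything to the sum inequality $\sum_i \frac{c+2g_i}{f-2g_i} \ge N\,\frac{c+2g}{f-2g}$ — but you establish that key middle inequality by a genuinely different argument. The paper invokes its Appendix result (Theorem~\ref{th:sumNsumsum}), a ``sum of ratios versus ratio of sums'' bound proved via the AM--HM inequality of Lemma~\ref{l:sum2}, whereas you observe that $h(t)=\frac{c+2t}{f-2t}=-1+\frac{f+c}{f-2t}$ is convex on $[0,f/2)$ and apply Jensen. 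Your route is self-contained and arguably more robust: it exploits exactly the structural feature of this application, namely that numerator and denominator are coupled through the single variable $g_i$ (indeed $(c+2g_i)+(f-2g_i)=f+c$ is constant, so larger numerators pair with smaller denominators), which is precisely the regime in which the sum-of-ratios bound is safe; stated for arbitrary independent pairs $(x_i,y_i)$ the appendix inequality needs such a coupling (or oppositely ordered sequences, \`a la Chebyshev's sum inequality) to hold, e.g.\ $x=(1,10)$, $y=(2,10)$ gives $\tfrac12+1 < 2\cdot\tfrac{11}{12}$, so your Jensen argument sidesteps any such subtlety. The paper's approach, in exchange, packages the inequality as a reusable standalone lemma. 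Two small remarks: the second derivative is $h''(t)=\frac{8(f+c)}{(f-2t)^3}$ rather than $\frac{4(f+c)}{(f-2t)^3}$ (only positivity matters, so nothing breaks), and your closing observation that strict convexity yields a strict inequality at the level of the sums is correct, though strictness need not survive the ceilings, so the non-strict conclusion $M_{het}\ge M_{hom}$ is the right one to assert.
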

\begin{proof}
  From Theorem~\ref{th:LBvin}, we know that \small$M_{het}\!\geq\! N \!+\! \left\lceil\sum\limits_{i=1}^N\frac{c+2g_i}{f-2g_i}\right\rceil$\normalsize\!. Now, we apply Theorem~\ref{th:sumNsumsum} from Appendix~\ref{app:MathInequality} to \small$\sum\limits_{i=1}^N\frac{c+2g_i}{f-2g_i}$\normalsize:
  \begin{eqnarray}
    \sum\limits_{i=1}^N \!\frac{c\!+\!2g_i}{f\!-\!2g_i} \geq N \frac{\sum\limits_{i=1}^N \!c\!+\!2g_i}{\sum\limits_{i=1}^N \!f \!-\! 2g_i} = N\frac{Nc\!+\!2Ng}{Nf\!-\!2Ng} = N\frac{c\!+\!2g}{f\!-\!2g}. \nonumber
  \end{eqnarray}
  Hence, $M_{het}\!\geq\! N \!+\! \left\lceil\frac{c+2g}{f-2g}N\right\rceil$. Since the optimal number of UAVs for the homogeneous scenario is $M_{hom}\!=\!N \!+\! \left\lceil\frac{c+2g}{f-2g}N\right\rceil$ (see Theorem~\ref{th:optimal}), then $M_{het}\geq M_{hom}$.
\end{proof}

\section{Heterogeneous Scenarios: the {\sc PHeRR} Algorithm}
\label{sec:het_herr}

In this section, we introduce a UAV recharge scheduling algorithm for heterogeneous scenarios. Such an algorithm works in two phases: in the first phase the whole set of locations are properly partitioned into subsets so that, in the second phase, a recharging scheduling routine is individually applied to each  of the resulting subsets.


The rationale behind partitioning the whole set of locations is to work with more homogeneous subsets. As it will be clear later, this will prevent the furthest locations, which can only be covered for less time, from affecting the coverage of closest locations.

 \subsection{The {\sc HeRR} routine}

\begin{algorithm}[t]
  \caption{\small \emph{Heterogeneous Rotating Recharge} ({\sc HeRR})\normalsize}
  \label{a:HeRR}
  \begin{algorithmic}[1]
    \REQUIRE{$\mathcal{I} \subseteq \mathcal{N}$, $f$, $\{g_{i_j}\}_{i_j\in\mathcal{I}}$, $c$.}
    \STATE Sort $\{g_{i_j}\}$ in increasing order.
    \STATE For all $1\!\leq j\!\leq \!I$: obtain \small$x_{i_j}=\frac{f-2g_{i_I}}{\sum_{l=1}^I g_{i_l}}\cdot g_{i_j}$\normalsize, where $I= \mid \!\mathcal{I}\! \mid$.
    \STATE Initially, one UAV is instructed to provide service at each of the ${\cal I}$ aerial locations.
    \STATE Set $i_j\leftarrow 1$.
    \STATE After $x_{i_j}\!-\!g_{i_j}$ time units, a fully charged backup UAV $u_c$ takes off from the \emph{RS} and goes to  location ${i_j}$. \label{st:xi-gi}
    \STATE When $u_c$ arrives to location ${i_j}$, it replaces the UAV that is covering it, which goes to recharge. Once recharged, that UAV will be considered as a backup UAV.\label{st:xi}
    \STATE Set $j\leftarrow \left(j \!\!\!\mod\! I\right)  + 1$ and go back to Step~\ref{st:xi-gi}.
  \end{algorithmic}
\end{algorithm}
\setlength{\textfloatsep}{15.5pt}

First, we introduce the recharge scheduling routine that is used once the whole set of locations has been partitioned. That routine, which we call \emph{Heterogeneous Rotating Recharge} ({\sc HeRR}), is a generalization of the {\sc HoRR} algorithm that  takes into account that  distances from the \emph{RS}  to the locations could be different.

Its code is shown in Algorithm~\ref{a:HeRR}, and it works as follows: Let $\mathcal{I}$ be a subset of $I$ locations obtained after partitioning $\mathcal{N}$. For each location $i_j \in \mathcal{I}$, every $x_{i_j}$ time units (Steps~\ref{st:xi-gi} and~\ref{st:xi}) the UAV that covers $i_j$ will go to recharge, regardless of whether or not it is actually running out of power. In addition, $g_{i_j}$ time units before that UAV is instructed to go to recharge, a backup UAV is sent to replace it, so that the coverage is maintained at all times. On its side, a recharged UAV is considered as a backup UAV.
As it can be seen, the main difference between {\sc HoRR} and {\sc HeRR} is that now the time instants at which UAVs are instructed to recharge are not equally spaced, but have been chosen so that no UAV will run out of energy before reaching the \emph{RS}.



For simplicity, from now on we consider that if $l\!> I$ then $g_{i_l}\!=\!g_{i_j}$, where $1\!\leq\!j\!\leq \!I$ is the only number such that $l\equiv j \!\!\mod I$ (the same applies for $x_{i_l}$). 

In the following theorem, we provide a bound on the number of UAVs that guarantees that, by using the \textsc{HeRR} routine, each location in $\mathcal{I}$ is permanently covered.

\begin{theorem}
\label{the:HeRR}
Assume a fleet of UAVs that, by using {\sc HeRR}, provide service in a heterogenous scenario, and the resulting system is characterized by $f$, $c$ and $g_{i_j}$ (for each ${i_j}\!\in\!\mathcal{I}$). A sufficient number of UAVs necessary to guarantee that $I$ of them will be always providing service is:
\begin{eqnarray}
  M = I + \max_{1\leq k\leq I} \{n_k\}, \nonumber
  \label{eq:suff}
\end{eqnarray}
where $n_k \!=\! \min\limits_{n\in\mathds{N}}  \left\{\!n:\! \sum\limits_{l=k+1}^{k+n} \!g_{i_l} \!\geq\! \frac{g_{i_k}+c+g_{i_{k^*}}}{f-2g_{i_I}}\sum\limits_{j=1}^I \!g_{i_l}\!\right\}$, and $i_{k^*} \!=\! \min\left\{i_\alpha>i_k: \!\sum\limits_{l=k+1}^\alpha \!x_{i_l} \geq g_{i_k}\!+\!c\!+\!g_{i_\alpha}\right\}$. \\

\end{theorem}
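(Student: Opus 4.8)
The plan is to mirror the argument of Theorem~\ref{the:HoRR}, but keep careful track of the fact that now the recharge instants are not equally spaced. The key quantity to control is, for each ``slot'' $k$ in the rotation, the number of backup UAVs that are simultaneously away from service because they were sent to recharge at or after the $k$-th recharge event but have not yet come back to perform a replacement. I would show that a UAV freed at the $k$-th recharge event returns and is reused exactly $n_k$ slots later (that is the role of $n_k$), so at any instant at most $\max_k n_k$ backup UAVs are in transit/charging, and therefore $I + \max_k n_k$ UAVs suffice in total: $I$ on station plus $\max_k n_k$ in the recharge pipeline.

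\textbf{Key steps, in order.} First I would establish the cycle structure of \textsc{HeRR}: by construction the $I$ recharge events within one full rotation occur at cumulative offsets $\sum_{l=1}^{j} x_{i_l}$, and the total rotation length is $\sum_{j=1}^{I} x_{i_j} = f - 2g_{i_I}$ (this is just the definition of the $x_{i_j}$), so the pattern of recharge instants is periodic with period $f-2g_{i_I}$. Second, I would fix an arbitrary recharge event — say the $k$-th in some rotation — at which the UAV $u_e$ covering location $i_k$ is relieved. That UAV flies back to the \emph{RS} (time $g_{i_k}$), recharges (time $c$), and is ready to take off; it will be assigned to the next recharge slot whose replacement take-off instant is at least $g_{i_k}+c+g_{i_{k^*}}$ later, where the $+g_{i_{k^*}}$ accounts for the outbound flight to whatever location it is sent to serve. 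Translating ``the next recharge slot reachable'' into an index count, and using the relationship between the $x_{i_l}$'s and the $g_{i_l}$'s (namely $x_{i_l} = \frac{f-2g_{i_I}}{\sum_l g_{i_l}} g_{i_l}$, so that summing $x$'s over a block of slots is proportional to summing the corresponding $g$'s), the number of recharge events that occur strictly between $u_e$ leaving service and $u_e$ re-entering service as a backup is exactly the minimal $n$ for which $\sum_{l=k+1}^{k+n} g_{i_l} \ge \frac{g_{i_k}+c+g_{i_{k^*}}}{f-2g_{i_I}} \sum_{j=1}^{I} g_{i_l}$ — i.e. $n_k$ as defined, with $i_{k^*}$ the identity of the slot it is ultimately reused for. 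Third, I would observe that during those $n_k$ intervening recharge events, $n_k$ distinct backup UAVs had to be supplied (one per event) while $u_e$ was unavailable; and since this holds simultaneously around every slot, the instantaneous demand for backup UAVs never exceeds $\max_{1\le k\le I} n_k$. Conversely each of these backups does return in time (that is exactly what the definition of $n_k$ encodes), so $\max_k n_k$ backups are also sufficient. Adding the $I$ UAVs permanently on station gives $M = I + \max_{1\le k\le I} n_k$, and one checks that with this many UAVs Step~\ref{st:xi-gi} always finds a fully charged backup available, so coverage is never interrupted.

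\textbf{Main obstacle.} The delicate point is the bookkeeping in the second step: showing that the ``next reusable slot'' is well-defined (the min in the definition of $i_{k^*}$ is over a nonempty set, which uses $2g_{i_j}<f$ and the periodicity), and that counting recharge events by the $x$-sum condition is equivalent, via the proportionality $x_{i_l}\propto g_{i_l}$, to the $g$-sum condition appearing in $n_k$. One must also be careful that $i_{k^*}$ and the index $k+n_k$ refer consistently to the same slot — the ceiling-type rounding (``$u_e$ used for the $n$-th or $(n+1)$-th replacement'' in the \textsc{HoRR} proof) here becomes the minimality in both the $n_k$ and the $i_{k^*}$ definitions, and I would argue they pick out the same event. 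The rest (periodicity, the final ``$I$ on station $+$ pipeline'' accounting) follows the homogeneous proof almost verbatim, since setting all $g_{i_j}\equiv g$ collapses $x_{i_j}$ to $\frac{f-2g}{I}$ and $n_k$ to $\lceil\frac{c+2g}{x}\rceil$, recovering Theorem~\ref{the:HoRR}.
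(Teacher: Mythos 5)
Your proposal follows essentially the same argument as the paper's proof: track the UAV relieved at the $k$-th recharge event, note it needs $g_{i_k}+c+g_{i_{k^*}}$ time before it can serve again, count the $n_k$ intervening recharge events via the proportionality $x_{i_l}\propto g_{i_l}$, and conclude that $\max_k n_k$ backups plus the $I$ serving UAVs suffice. Your additional attention to periodicity, the well-definedness of $i_{k^*}$, and the consistency of the two minimality conditions only makes explicit points the paper leaves implicit; the approach is the same.
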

\begin{proof}
 According to Algorithm~\ref{a:HeRR}, at some time instant $L\left(f-2g_{i_I}\right) + \sum_{l=1}^k x_{i_l}$ for some $L\!\in\!\mathds{Z}_{\geq0}$, $1\!\leq k\!\leq \!I$,~a UAV $u_e$ that is covering location $i_j\!=\!i_k$ is instructed to recharge. UAV $u_e$ goes to the \emph{RS} while a backup UAV $u_c$ takes off at $L\left(f-2g_{i_I}\right)+\sum_{l=1}^k x_{i_l} \!-\! g_{i_k}$ to replace $u_e$ at the proper instant. While $u_e$ gets ready, other $n_k$ UAVs are instructed to recharge.
 Hence, the first location that $u_e$ will be able to be ready to replace the next time is \small$i_{k^*} \!=\! \min\left\{i_\alpha>i_k: \!\sum\limits_{l=k+1}^\alpha \!x_{i_l} \geq g_{i_k}\!+\!c\!+\!g_{i_\alpha}\right\}$\normalsize.
 Thus, the time needed by  $u_e$ to be able to replace another location is $g_{i_k}\! + \!c\! + \!g_{i_{k^*}}$. Hence, it is sufficient to have  $n_k$ backup UAVs ready to replace the $n_k$ UAVs that are being instructed to recharge during this period such that \small$\sum\limits_{l=k+1}^{k+n_k} \!x_{i_l} \!\geq\! g_{i_k}\!+\!c\!+\!g_{i_{k^*}}$\normalsize.
 According to the definition of each $x_{i_l}$, the minimum $n_k$ that accomplishes this~is: \begin{eqnarray}
   n_k = \min_{n\in\mathds{N}} \left\{n: \sum\limits_{l=k+1}^{k+n} g_{i_l} \geq \frac{g_{i_k}+c+g_{i_{k^*}}}{f-2g_{i_I}} \sum\limits_{l=1}^I g_{i_l}  \right\}. \nonumber
   \label{eq:nk}
 \end{eqnarray}

\begin{figure*}[t]
    \centering
    \vspace{1mm}
    \includegraphics[trim={5mm 3mm 4mm 3mm}, width=0.65\linewidth]{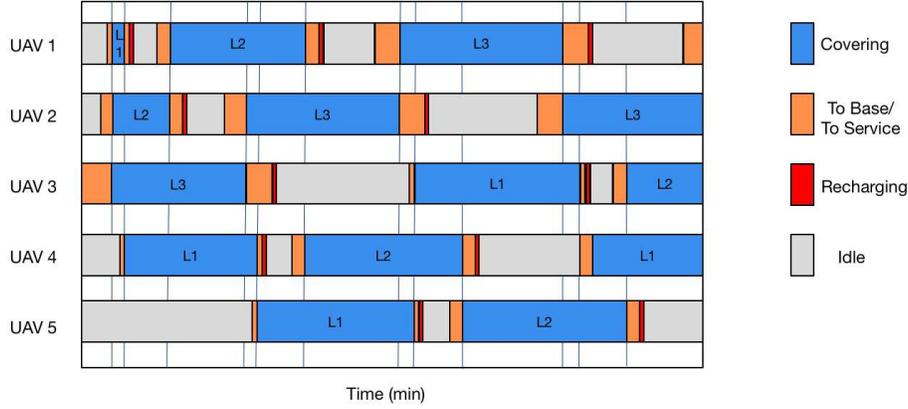}
    \vspace{-0mm}
    \caption{UAV recharge scheduling by using {\sc HeRR} with $I=3$, $f=45$~min, $c=15$~s and $\{g_i\} = \{1,5,9\}$~min.}
    \vspace{-3mm}
    \label{fig:HeRR_Ex1}
\end{figure*}

 Thus, every time a UAV in aerial location $i_k\!\in \!\mathcal{I}$ needs to be replaced, it is sufficient to have $n_k$ backup UAVs. Hence, in general, the sufficient amount of auxiliary UAVs is $\max\limits_{1\leq k\leq I} \{n_k\}$, while other $I$ UAVs are actually providing service.
 Hence, the theorem follows.
\end{proof}

We note that, under homogeneous conditions, the proof of Theorem~\ref{the:HeRR} is equivalent to the proof of Theorem~\ref{the:HoRR}. That is, Theorem~\ref{the:HeRR}, when applied to a homogeneous scenario, provides the same optimal number of UAVs as Theorem~\ref{the:HoRR}.
Furthermore, in Appendix~\ref{app:fromto} we
 show that the value provided by Theorem~\ref{the:HeRR} is very close to the actual number of drones used by {\sc HeRR}.

Regarding the complexity of finding $n_k$, in the next lemma we show that it is at most logarithmic in the input parameters.

\begin{lemma}
For all $1\!\leq \! k\! \leq \!I$,  obtaining $n_k$  has a complexity that is at most logarithmic as $\mathcal{O}\!\left(\log A_k\right)$, where
  \vspace{1mm}
  \begin{eqnarray}
    A_k = \left\lceil\frac{g_{i_k}\!+\!c\!+g_{i_{k^*}}}{f-2g_{i_I}} \sum\limits_{l=1}^I g_{i_l}\bigg/g_{i_1}\right\rceil\!\!. \nonumber
  \end{eqnarray}
  \label{o:Ak}
\end{lemma}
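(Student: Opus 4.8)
The plan is to exploit the monotonicity of the inequality that defines $n_k$ and to reduce the computation of $n_k$ to a \emph{bounded} binary search, then to check that each probe of the search is cheap.

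\textbf{Step 1 (monotonicity).} First I would observe that, since every displacement time is positive, the map $n\mapsto \sum_{l=k+1}^{k+n} g_{i_l}$ is strictly increasing in $n$. Hence the set of integers $n\in\mathds{N}$ satisfying $\sum_{l=k+1}^{k+n} g_{i_l}\ \ge\ \frac{g_{i_k}+c+g_{i_{k^*}}}{f-2g_{i_I}}\sum_{l=1}^{I} g_{i_l}$ is upward closed, so $n_k$ is simply its least element and can be located by binary search as soon as an a priori upper bound on it is available.

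\textbf{Step 2 (bounding $n_k$).} Next I would derive that bound. Because the $g_{i_l}$ are listed in increasing order and then extended periodically (so that $g_{i_l}\in\{g_{i_1},\dots,g_{i_I}\}$ for every $l$), each term satisfies $g_{i_l}\ge g_{i_1}$, whence $\sum_{l=k+1}^{k+n} g_{i_l}\ge n\,g_{i_1}$. Consequently the defining inequality already holds whenever $n\ge \frac{g_{i_k}+c+g_{i_{k^*}}}{f-2g_{i_I}}\sum_{l=1}^{I} g_{i_l}\big/g_{i_1}$, i.e.\ whenever $n\ge A_k$, so that $n_k\le A_k$. Restricting the binary search to $\{1,\dots,A_k\}$ therefore uses $\mathcal{O}(\log A_k)$ iterations.

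\textbf{Step 3 (constant-time probe).} It remains to see that each iteration is $\mathcal{O}(1)$, so the iteration count dominates. After a one-time $\mathcal{O}(I)$ precomputation of the prefix sums of $g_{i_1},\dots,g_{i_I}$ and of the period sum $S=\sum_{l=1}^{I} g_{i_l}$, any window sum $\sum_{l=k+1}^{k+n} g_{i_l}$ over the periodic sequence can be evaluated in $\mathcal{O}(1)$: it equals $\lfloor n/I\rfloor\, S$ plus a contiguous partial sum of at most $I$ consecutive terms, and the latter is a difference of two precomputed prefix sums (with at most one extra $S$ absorbed to handle the index wrap-around at $k$). Hence each comparison against the fixed right-hand side $\frac{g_{i_k}+c+g_{i_{k^*}}}{f-2g_{i_I}}\sum_{l=1}^{I} g_{i_l}$ costs $\mathcal{O}(1)$, and the whole determination of $n_k$ runs in $\mathcal{O}(\log A_k)$ time, as claimed.

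I expect the only delicate point to be the bookkeeping of the periodic extension in Step~3 — getting the split $n=qI+r$ and the wrap-around of the starting index exactly right — rather than any substantive difficulty; the conceptual core is merely the monotonicity of the window sum together with the lower estimate $g_{i_l}\ge g_{i_1}$, which is precisely what makes dividing by $g_{i_1}$ in the definition of $A_k$ a valid upper bound for $n_k$. I would also note in passing that $i_{k^*}$, which appears on the right-hand side, is itself found by an analogous monotone search and so is not the bottleneck.
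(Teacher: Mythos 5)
Your proposal is correct and follows essentially the same route as the paper's proof: the key step in both is the estimate $\sum_{l=k+1}^{k+n} g_{i_l} \geq n\,g_{i_1}$ (valid because the $g_{i_j}$ are sorted increasingly and extended periodically), which shows the defining inequality holds for all $n \geq A_k$ and hence confines the search to $\{1,\dots,A_k\}$, yielding the $\mathcal{O}(\log A_k)$ bound via binary search. Your Steps~1 and~3 (upward-closedness of the feasible set, and constant-time window-sum probes via prefix sums over the periodic extension) make explicit two points the paper leaves implicit, but they do not change the substance of the argument.
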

\begin{proof}
  In Theorem~\ref{the:HeRR} we need to find $n_k$ as the minimum natural number $n$ accomplishing the indicated inequality. Hence, if we find for all $k$ some natural number $A_k$ such that the inequality is guaranteed to hold, the search space for natural numbers gets reduced to the finite set $\{1,\hdots, A_k\}$ and the complexity of finding the minimum $n$ would be at most logarithmic with $A_k$. Hence, we find such natural number $A_k$.

  In the proof of Theorem~\ref{the:HeRR}, we need that $n_k$ verifies that:
  \begin{eqnarray}
    \sum\limits_{l=k+1}^{k+n_k} x_{i_l} = \frac{f-2g_{i_I}}{\sum\limits_{l=1}^N g_{i_l}} \sum\limits_{l=k+1}^{k+n_k} g_{i_l} \geq g_{i_k} \!+\!c\!+ g_{i_{k^*}}. \nonumber
  \end{eqnarray}

  Since $\{g_{i_j}\}$ are sorted in increasing order, $g_{i_l}\!\geq\! g_{i_1}$ for all~$l$, and hence the following inequality holds:
  \vspace{1mm}
  \begin{eqnarray}
    \frac{f\!-\!2g_{i_I}}{\sum\limits_{l=1}^I g_{i_l}} \!\!\sum\limits_{l=k+1}^{k+n_k} \!\!g_{i_l} \geq
    \frac{f\!-\!2g_{i_I}}{\sum\limits_{l=1}^I g_{i_l}} \!\!\sum\limits_{l=k+1}^{k+n_k} \!\!g_{i_1} = \frac{f\!-\!2g_{i_I}}{\sum\limits_{l=1}^I g_{i_l}} n_k g_{i_1}. \nonumber
  \end{eqnarray}
  \vspace{1mm}

  Hence, if $n_k\geq \frac{g_{i_k}+c+g_{i_{k^*}}}{f-2g_{i_I}}\cdot \sum_{l=1}^I g_{i_l} \big/ g_{i_1}$, we might not get the minimum number of needed auxiliary drones $n_k$ needed by \textsc{HeRR}~but instead get an upper bound $A_k$, for all $1\!\leq  \! k\!\leq \!I$. Thus, we define $A_k$ as:
  \vspace{1mm}
  \begin{eqnarray}
    A_k = \left\lceil \frac{g_{i_k}\!+\!c\!+g_{i_{k^*}}}{f-2g_{i_I}} \sum\limits_{l=1}^I g_{i_l}\bigg/g_{i_1}\right\rceil\!\!. \nonumber
  \end{eqnarray}

  Hence, the lemma follows.
\end{proof}

In Figure~\ref{fig:HeRR_Ex1}, we show an illustrative example of how the {\sc HeRR} routine works on a set formed by three locations so that $f=45$~min, $\{g_i\} \!=\! \{1,5,9\}$ and $c=5$~min. In that case, Theorem~\ref{the:HeRR} tells us that two additional UAVs are enough to guarantee a persistent service at these three locations (i.e., $M=5$). It can be seen that now the time instants at which UAVs go to recharge are not homogeneously spaced.



\subsection{The {\sc PHeRR}  algorithm}
\label{s:eHeRR}

A feature that characterizes how {\sc HeRR} works is that the different locations are covered by the UAVs in a rotating fashion. Furthermore, all locations are covered during the same time interval, which is given by the maximum flight time of the UAVs, minus twice the displacement time to go to the furthest location (i.e., $f - 2 g_{i_I}$). Clearly, this results in all the locations being influenced by the furthest one, which could be quite unsuitable in very heterogeneous scenarios.


Let us illustrate what we just said with a simple example. Assume a scenario where we want to cover five locations with displacement times given by $\{g_1,g_2,g_3,g_4,g_5\}=\{5,6,9,10,15\}$ (taking $f=45$~min and $c=15$~s). By directly applying Theorem~\ref{the:HeRR} to this example, we will obtain that the required number of UAVs is $14$. However, if we partition these locations into 3 sets with \emph{similar} displacement times, one formed by  locations $1$ and $2$, another formed by locations $3$ and $4$, and another formed by location $5$, and apply Theorem~\ref{the:HeRR} to each set, we will obtain that the number of required UAVs is $11$: $3$ UAVs to cover locations $1$ and $2$; $4$ UAVs to cover locations $3$ and $4$; and $4$ UAVs to cover location $5$.

Next, we formulate the combinatorial problem to obtain the best partition of the locations so that, by using the {\sc HeRR} routine on each of the obtained sets, the resulting number of UAVs is the minimum.

\begin{newproblem}
Assume a fleet of UAVs that provide service to a heterogenous scenario characterized by parameters $f$, $c$ and $g_i$ (for each $i\!\in\!\mathcal{N}$).  Find a partition $\mathcal{P}_{\mathcal{N}} = \{\mathcal{I}_1,\hdots,\mathcal{I}_N\}$
so that, by applying the {\sc HeRR} routine to each element of the partition, the resulting total number of UAVs is the minimum.
\label{def:Partition}
\end{newproblem}

Unfortunately, partition problems such as the one we presented above are known to be NP-hard~\cite{chopra1993partition}. Therefore, here we introduce a heuristic algorithm, which we call \emph{Partitioned Heterogeneous Rotating Recharge} ({\sc PHeRR}), that works in linear time.

The code of {\sc PHeRR} is shown in Algorithm~\ref{alg:PHeRR}.
It works as follows: First, it sets the initial partition as the whole set of locations and computes the amount of needed UAVs, $M$ (Steps~\ref{step:initialpartition} to~\ref{step:firstM}).
Then, at each iteration of the while loop, the algorithm takes the subset of the current partition that contains more locations and splits it into two new subsets by moving the furthest location to a separate subset (Steps~\ref{step:newP'} to~\ref{step:PP+1}). This is done because, as mentioned earlier,  the number of UAVs found by \textsc{HeRR}  is affected by the furthest location. The resulting new partition is evaluated (Steps~12 to~13) and the  process is repeated until the total number of UAVs required becomes higher than with the previous configuration. This leads to find a (local) minimum.
Finally, the {\sc HeRR} routine is applied to each one of the subsets of the partition that requires the smallest number of UAVs among the probed partitions  
(Step~\ref{step:HeRRtoeachIp}).

At this point, we would like to point out that we have also compared the linear search of partitions that we use with the solution provided by a full combinatorial search (which is not feasible in practice, since it takes a lot of time). As we show in Appendix~\ref{app:fromto}, the difference among them is almost negligible.

Note that, in case of addressing a homogeneous scenario, the \textsc{PHeRR} algorithm will provide the same schedule as \textsc{HoRR} and hence, it will provide optimal results. Indeed, since in that case all the displacement times are the same, then the initial partition of \textsc{PHeRR} contains all locations with equal displacement times, and no other partition will be checked (indeed, no other partition could provide a lower total number of UAVs). In such a homogeneous case, as noted before, Theorem~\ref{the:HeRR} finds the optimal number of UAVs.

\begin{algorithm}[t]
  \caption{\small \emph{Partitioned Heterogeneous Rotating Recharge} ({\sc PHeRR})\normalsize}
  \label{alg:PHeRR}
  \begin{algorithmic}[1]
    \REQUIRE{${\cal N}$, $f$, $\{g_i\}_{i\in\mathcal{N}}$, $c$.}
    \STATE Set the initial partition $\mathcal{P}=\mathcal{N}$, with its elements increasingly ordered in accordance with $\{g_i\}_{i\in\mathcal{N}}$.  \label{step:initialpartition}
    \STATE Set the partition size $P =  1$.
    \STATE Define the only set of the partition $\mathcal{P}$ as $ \mathcal{I}_1$.
    \STATE Apply Theorem~\ref{the:HeRR} to $\mathcal{P}$ and set $M$ and $M_{new}$ to the provided value. \label{step:firstM}
    \WHILE{$M_{new} \leq M$} \label{step:whiledo}
    \STATE Set $M \leftarrow M_{new}$.
    \STATE Derive a new partition $\mathcal{P}'$ of subsets $\mathcal{I}^{'}_p$, $\forall \; 1 \!\leq\! p\!\leq\! P\!+\!1$: \label{step:newP'}
    \STATE \quad Set $\mathcal{I}^{'}_p \leftarrow \mathcal{I}_{p-1}$, $\forall \; 3 \!\leq\! p\!\leq\! P\!+\!1$.
    \STATE \quad $\mathcal{I}^{'}_2 \leftarrow \max \{ \mathcal{I}_1 \}$
    \STATE \quad $\mathcal{I}^{'}_1 \leftarrow \mathcal{I}_1 - \mathcal{I}^{'}_2$.
    \STATE \quad Set $\mathcal{P} \leftarrow \mathcal{P}^{'}$ and $P \!\leftarrow\! P\!+\!1$. \label{step:PP+1}
    \STATE Obtain the number of UAVs $M_p$ used for each subset $\mathcal{I}_p\!\in\!\mathcal{P}$, $\forall \; 1\!\leq\! p\!\leq\! P$ (by applying Theorem~\ref{the:HeRR} to each $\mathcal{I}_p$)
    \STATE Derive $M_{new} \leftarrow \sum\limits_{p=1}^P M_p$.
    \ENDWHILE \label{step:endwhiledo}
    \STATE Apply {\sc HeRR} to each $\mathcal{I}_p\in\mathcal{P}$. \label{step:HeRRtoeachIp}
  \end{algorithmic}
\end{algorithm}
\setlength{\textfloatsep}{15.5pt}

\section{Numerical analysis of the \textsc{PHeRR} algorithm}
\label{s:perf}

As we have previously done in the case of the {\sc HoRR} algorithm, in this section we numerically analyze the performance of the \textsc{PHeRR} algorithm.



%
%
%

\subsection{Effect of heterogeneity}
\label{ss:hetero}

What distinguishes a homogeneous scenario from a heterogenous one is the fact that, in the latter case, the displacement times from the \emph{RS} to the locations can be different. Therefore and in order to characterize the \emph{heterogeneity} of a given scenario, we define its \emph{displacement deviation} (denoted by $\Delta$) as the maximum displacement time deviation of any location over the average displacement time $\overline{g}$.  Hence, when $\Delta\!=\!0$ we are in a homogeneous scenario, and the higher the $\Delta$ value, the higher the heterogeneity.

In Figure~\ref{fig:PHeRR_heterogeneity}, we parallelize the analysis performed in Figure~\ref{fig:HoRR_beh} in the homogeneous case but for the heterogeneous case.
More specifically, we fix an average displacement time $\overline{g}$ and draw $g_i$ values according to a uniform random variable $\mathcal{U}\left(\overline{g}\left(1\!-\!\Delta\right)\!,\, \overline{g}\left(1\!+\!\Delta\right)\right)$.
Then, for each value of $f$, we vary the value of $\Delta$ from $0$ to~$0.5$, which results in a band of lines of degrading color tone in the figure, the lower envelop of the band being the performance in the homogeneous case. For each value of $\Delta$, we used \textsc{Matlab} to simulate $1.000$ different realizations of the random heterogeneous scenario and computed average results.

Beside using $c=15$~s, in our simulations we assumed that $\overline{g}=5$~min. By using a UAV with a speed of $72$~km/h---which is fairly conservative---this corresponds to a distance of $6$~km from the \emph{RS}. That will allow us to consider scenarios with a wide range of heterogeneities. For instance, by using a displacement deviation value of $\Delta=0.3$, the UAVs can be placed at distances between $4.2$ and $7.8$~km from the \emph{RS}, and by using $\Delta=0.5$ the UAVs can be placed at distances between $3$ and $9$~km. Nevertheless, we have also performed simulations with different values of both  $c$ and $\overline{g}$, and we observed that the shapes were similar.



\begin{figure}
\vspace{-8mm}
\includegraphics[width=9cm]{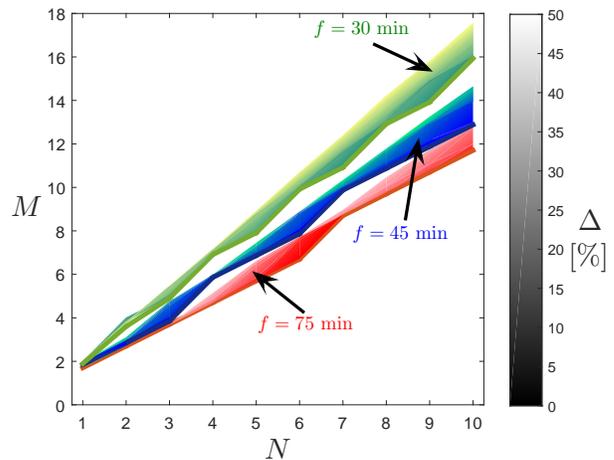}
    \caption{Behaviour of the \textsc{PHeRR} algorithm. $\overline{g}=5$~min and $c=15$~s. Different hues show the results for different values of $\Delta$, from $\%0$ to $50\%$.}
\label{fig:PHeRR_heterogeneity}
\vspace{-2mm}
\end{figure}

First of all, in Figure~\ref{fig:PHeRR_heterogeneity}, it can be readily seen that the more we increase the value of $\Delta$, the more the value of $M$ increases (for the same number of locations, $N$).
This behaviour matches with the fact that,
as it has been already shown in Theorem~\ref{the:LBhomo}, covering in heterogeneous scenarios is, in general, more costly than covering in homogeneous ones (i.e., $M_{het} \!\geq\! M_{hom}$).
However, it can also be observed that the increase in the value of $M$ with $\Delta$  is quite moderate. Indeed, in most cases, only one additional UAV (with respect to the homogeneous case) was required, and even in stressful conditions (namely, with $f=30$~min, $\Delta \approx 0.5$ and more than $8$~locations), only two additional UAVs were enough.

Whereas our analysis shows that heterogeneity is not a factor that significantly affects system performance in most cases, it must be taken into account the possibility of finding a scenario that greatly increases the number of UAVs. Anyhow, with the realistically vast range of scenarios represented in Figure~\ref{fig:PHeRR_heterogeneity}, we have found that the average values of the UAV fleet size increase only, on average, by one or two units with respect to the homogeneous case.


\subsection{Effect of overhead}

Next, we evaluate the performance of {\sc PHeRR} regarding the number of UAVs necessary to guarantee that a given set of locations are covered in a persistent manner (i.e., $M$) against the lower bound provided by Theorem~\ref{th:LBvin} (i.e., against $M_{LB}$).
For such a task, we define the \emph{approximation factor} of {\sc PHeRR} against the lower bound
as the ratio between $M$ and $M_{LB}$. Clearly, the closer the value of the approximation factor to~$1$, the better the result.

Recall that we assume that the maximum flight time of the UAVs is $f$. We define the \emph{relative overhead} of location~$i \!\in\! {\cal N}$ as $\omega_i\!=\!\frac{2\,g_i}{f}$.
Roughly speaking, $\omega_i$ indicates the fraction of time that a UAV will use to fly from the \emph{RS} to location $i$ and come back. We also define the \emph{average relative overhead}, or just \emph{overhead}, as \blue{$\omega = Avg(\omega_i) = \frac{2\,\overline{g}}{f}$}. Then, by fixing the flight time and varying the displacement times, we can model scenarios with different overheads.

In Figure~\ref{fig:DeltaOmegaN10HeL}, we show the approximation factor of \textsc{PHeRR}. We show only average results because the observed variability is very low and cannot be well appreciated in the figure.
We fix $f=45$~min and $c=15$~s and, for each value of $\omega$, derive the corresponding value of $\overline{g}$, on top of which we apply a deviation $\Delta$ between $0$ and $0.5$,  as explained before.
We have also considered two different fleet sizes: $N\!=\!10$ and $N\!=\!15$. Before we proceed with the analysis of the results, it must be taken into account that the values $M_{LB}$ provided by Theorem~\ref{th:LBvin} are not guaranteed to be optimal, \blue{and the real optimal could be greater than $M_{LB}$}. So, the values obtained for the \emph{approximation factor} are pessimistic, in the sense that they represent upper bounds (i.e., real values could be  smaller).

\begin{figure}
    \centering
    \includegraphics[width=9cm]{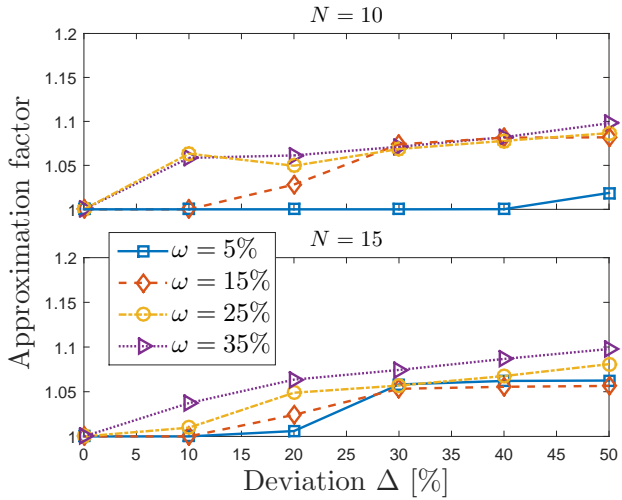}
    \caption{Impact of displacement time deviation $\Delta$ on the approximation factor of \textsc{PHeRR} w.r.t. \textsc{LB}. $f=45$~min, $c=15$~s.}
    \label{fig:DeltaOmegaN10HeL}
    \vspace{-2mm}
\end{figure}

It can be seen that the approximation factor increases with the heterogeneity. However, such an increase occurs in a smooth way and quickly stabilizes. This behavior is compatible with our results in Section~\ref{ss:hetero}. This confirms that heterogeneity is not a factor that significantly affects system performance.

Furthermore, Figure~\ref{fig:DeltaOmegaN10HeL} also shows that {\sc PHeRR}  provides very good results, with  approximation factors always below $1.1$.
This is much better than previous results~\cite{8901097,9213932}, which respectively achieved, on average, approximation factors of $1.5$ or $1.7$.
More precisely, it can be observed that approximation factors close to $1.1$ occur in stressful conditions, with large values of both $\Delta$ and $\omega$.
That is, approximation factors close to $1.1$ only occur in very heterogeneous scenarios in which the UAVs must use a significant amount of energy to fly
to/from the locations.
In contrast, when the conditions are less stringent, {\sc PHeRR}  provides near-optimal results, to the point where it is optimal in homogeneous scenarios or scenarios with very small overhead.

\subsection{Tightness of $M_{LB}$}

In Theorem~\ref{th:LBvin} we obtained a lower bound on the minimum number of UAVs necessary to guarantee that $N$ of them will be always providing service. Then, in the previous subsection we have used it to analyze the performance of the {\sc PHeRR} algorithm. However, since the values provided by {\sc PHeRR} are very close to $M_{LB}$ (see Figure~\ref{fig:DeltaOmegaN10HeL}), this collaterally implies that the lower bound provided by Theorem~\ref{th:LBvin}  is very close to be optimal.

\section{Conclusions}
\label{s:conclusions}

In this article, we have studied the problem of the UAV fleet recharge scheduling, \blue{meant to minimize the fleet size while providing persistent service in a set of aerial locations}. We considered two scenarios: On one hand, we designed a simple scheduling mechanism for UAVs serving aerial locations with homogeneous distances to a recharge station, and we proved that it is feasible and optimal. On the other hand, we demonstrated that the problem becomes NP-hard when the aerial locations are non-evenly distributed. Then, we \blue{derived a very tight lower bound for the UAV fleet size and} designed a lightweight recharging scheduling scheme, which was shown to be \blue{not only much better than state-of-the-art heuristics but also near-optimal}.

\section*{Acknowledgment}

This work has been partially supported by the Region of Madrid through the TAPIR-CM project (S2018/TCS-4496).

\bibliographystyle{IEEEtran}
\bibliography{biblio}

\begin{thebibliography}{10}
\providecommand{\url}[1]{#1}
\csname url@samestyle\endcsname
\providecommand{\newblock}{\relax}
\providecommand{\bibinfo}[2]{#2}
\providecommand{\BIBentrySTDinterwordspacing}{\spaceskip=0pt\relax}
\providecommand{\BIBentryALTinterwordstretchfactor}{4}
\providecommand{\BIBentryALTinterwordspacing}{\spaceskip=\fontdimen2\font plus
\BIBentryALTinterwordstretchfactor\fontdimen3\font minus
  \fontdimen4\font\relax}
\providecommand{\BIBforeignlanguage}[2]{{%
\expandafter\ifx\csname l@#1\endcsname\relax
\typeout{** WARNING: IEEEtran.bst: No hyphenation pattern has been}%
\typeout{** loaded for the language `#1'. Using the pattern for}%
\typeout{** the default language instead.}%
\else
\language=\csname l@#1\endcsname
\fi
#2}}
\providecommand{\BIBdecl}{\relax}
\BIBdecl

\bibitem{arribas2021optimal}
E.~{Arribas}, V.~{Cholvi}, and V.~{Mancuso}, ``An optimal scheme to recharge
  communication drones,'' in \emph{IEEE GLOBECOM 2021 - IEEE Global
  Communications Conference}, 2021.

\bibitem{9086010}
V.~{Chamola}, V.~{Hassija}, V.~{Gupta}, and M.~{Guizani}, ``{A Comprehensive
  review of the COVID-19 pandemic and the Role of IoT, drones, AI, blockchain,
  and 5G in managing its impact},'' \emph{\!IEEE Access}, vol.~8, pp.
  90\,225--90\,265, 2020.

\bibitem{8641421}
I.~{Bor-Yaliniz}, M.~{Salem}, G.~{Senerath}, and H.~{Yanikomeroglu}, ``{Is 5G
  ready for drones: A look into contemporary and prospective wireless networks
  from a standardization perspective},'' \emph{IEEE Wireless Communications},
  vol.~26, no.~1, pp. 18--27, 2019.

\bibitem{MONTOYATORRES2015115}
J.~R. Montoya-Torres, J.~{López Franco}, S.~{Nieto Isaza}, H.~{Felizzola
  Jiménez}, and N.~Herazo-Padilla, ``A literature review on the vehicle
  routing problem with multiple depots,'' \emph{Computers \& Industrial
  Engineering}, vol.~79, pp. 115--129, 2015.

\bibitem{8660495}
M.~{Shin}, J.~{Kim}, and M.~{Levorato}, ``Auction-based charging scheduling
  with deep learning framework for multi-drone networks,'' \emph{IEEE
  Transactions on Vehicular Technology}, vol.~68, no.~5, pp. 4235--4248, 2019.

\bibitem{ERMAGAN2022105524}
\BIBentryALTinterwordspacing
U.~Ermağan, B.~Yıldız, and F.~S. Salman, ``A learning based algorithm for
  drone routing,'' \emph{Computers \& Operations Research}, vol. 137, p.
  105524, 2022. [Online]. Available:
  \url{https://www.sciencedirect.com/science/article/pii/S030505482100263X}
\BIBentrySTDinterwordspacing

\bibitem{9138314}
T.~Addabbo, S.~De~Muro, G.~Falaschi, A.~Fort, E.~Landi, R.~Moretti,
  M.~Mugnaini, F.~Nicolelli, L.~Parri, M.~Tani, M.~Tesei, and V.~Vignoli, ``An
  automatic battery recharge and condition monitoring system for autonomous
  drones,'' in \emph{2020 IEEE International Workshop on Metrology for Industry
  4.0 IoT}, 2020, pp. 1--5.

\bibitem{9019832}
V.~Hassija, V.~Chamola, D.~N.~G. Krishna, and M.~Guizani, ``A distributed
  framework for energy trading between uavs and charging stations for critical
  applications,'' \emph{IEEE Transactions on Vehicular Technology}, vol.~69,
  no.~5, pp. 5391--5402, 2020.

\bibitem{COKYASAR2021146}
\BIBentryALTinterwordspacing
T.~Cokyasar, ``Optimization of battery swapping infrastructure for e-commerce
  drone delivery,'' \emph{Computer Communications}, vol. 168, pp. 146--154,
  2021. [Online]. Available:
  \url{https://www.sciencedirect.com/science/article/pii/S0140366420320211}
\BIBentrySTDinterwordspacing

\bibitem{QIN2021107714}
\BIBentryALTinterwordspacing
W.~Qin, Z.~Shi, W.~Li, K.~Li, T.~Zhang, and R.~Wang, ``Multiobjective routing
  optimization of mobile charging vehicles for uav power supply guarantees,''
  \emph{Computers \& Industrial Engineering}, vol. 162, p. 107714, 2021.
  [Online]. Available:
  \url{https://www.sciencedirect.com/science/article/pii/S0360835221006185}
\BIBentrySTDinterwordspacing

\bibitem{7846742}
H.~Shakhatreh, A.~Khreishah, J.~Chakareski, H.~B. Salameh, and I.~Khalil, ``On
  the continuous coverage problem for a swarm of uavs,'' in \emph{2016 IEEE
  37th Sarnoff Symposium}, 2016, pp. 130--135.

\bibitem{hartuv2018scheduling}
E.~Hartuv, N.~Agmon, and S.~Kraus, ``Scheduling spare drones for persistent
  task performance under energy constraints,'' in \emph{Proceedings of the 17th
  AAMAS International Conference}, 2018, pp. 532--540.

\bibitem{8797808}
H.~{Park} and J.~R. {Morrison}, ``System design and resource analysis for
  persistent robotic presence with multiple refueling stations,'' in
  \emph{ICUAS}, 2019, pp. 622--629.

\bibitem{9213932}
E.~{Hartuv}, N.~{Agmon}, and S.~{Kraus}, ``Spare drone optimization for
  persistent task performance with multiple homes,'' in \emph{ICUAS}, 2020, pp.
  389--397.

\bibitem{michini2011automated}
B.~Michini, T.~Toksoz, J.~Redding, M.~Michini, J.~How, M.~Vavrina, and J.~Vian,
  ``Automated battery swap and recharge to enable persistent uav missions,'' in
  \emph{Infotech@ Aerospace 2011}, 2011, p. 1405.

\bibitem{liu2017quado}
Z.-N. Liu, D.~Zhi-Hao Wang~Leo, X.-Q. Liu, and H.~Zhao, ``{QUADO: An autonomous
  recharge system for quadcopter},'' \emph{2017 IEEE International Conference
  on Cybernetics and Intelligent Systems (CIS) and IEEE Conference on Robotics,
  Automation and Mechatronics (RAM), IEEE}, pp. 7--12.

\bibitem{8901097}
E.~Hartuv, N.~Agmon, and S.~Kraus, ``Scheduling spare drones for persistent
  task performance with several replacement stations - extended abstract,'' in
  \emph{2019 International Symposium on Multi-Robot and Multi-Agent Systems
  (MRS)}, 2019, pp. 95--97.

\bibitem{chopra1993partition}
S.~Chopra and M.~R. Rao, ``The partition problem,'' \emph{Mathematical
  programming}, vol.~59, no.~1, pp. 87--115, 1993.

\bibitem{korf2014optimal}
R.~E. Korf, E.~L. Schreiber, and M.~D. Moffitt, ``Optimal sequential multi-way
  number partitioning.'' in \emph{ISAIM}, 2014.

\bibitem{garey1979computers}
M.~R. Garey and D.~S. Johnson, \emph{Computers and intractability}.\hskip 1em
  plus 0.5em minus 0.4em\relax freeman San Francisco, 1979, vol. 174.

\end{thebibliography}
\vspace{-14mm}
\begin{IEEEbiography}[{\vspace{0mm}\includegraphics[width=0.8in,height=1.2in]{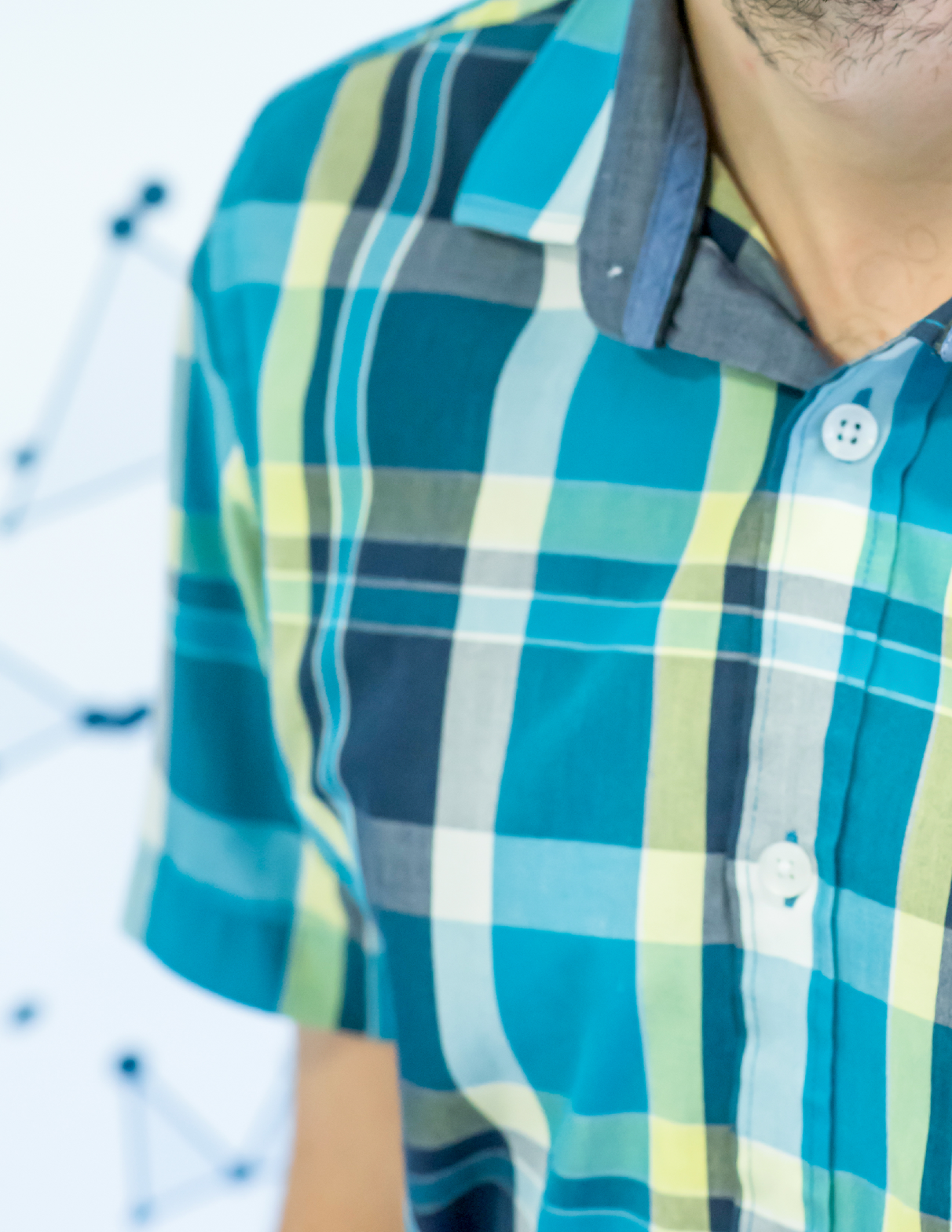}}]{Edgar~{Arribas}}
graduated in Mathematics from the Universitat de Val\`encia and received his PhD in Telematic Engineering in 2020 at IMDEA Networks Institute and Universidad Carlos III de Madrid, funded by the MECD FPU grant. He is currently a lecturer and researcher at the Applied Mathematics and Statistics Department of Universidad CEU San Pablo (Spain).
He works on optimization of dynamic relay in wireless networks.
\end{IEEEbiography}

\vspace{-14mm}
\begin{IEEEbiography}[{\vspace{-11mm}\includegraphics[width=0.95in,height=1.25in,keepaspectratio]{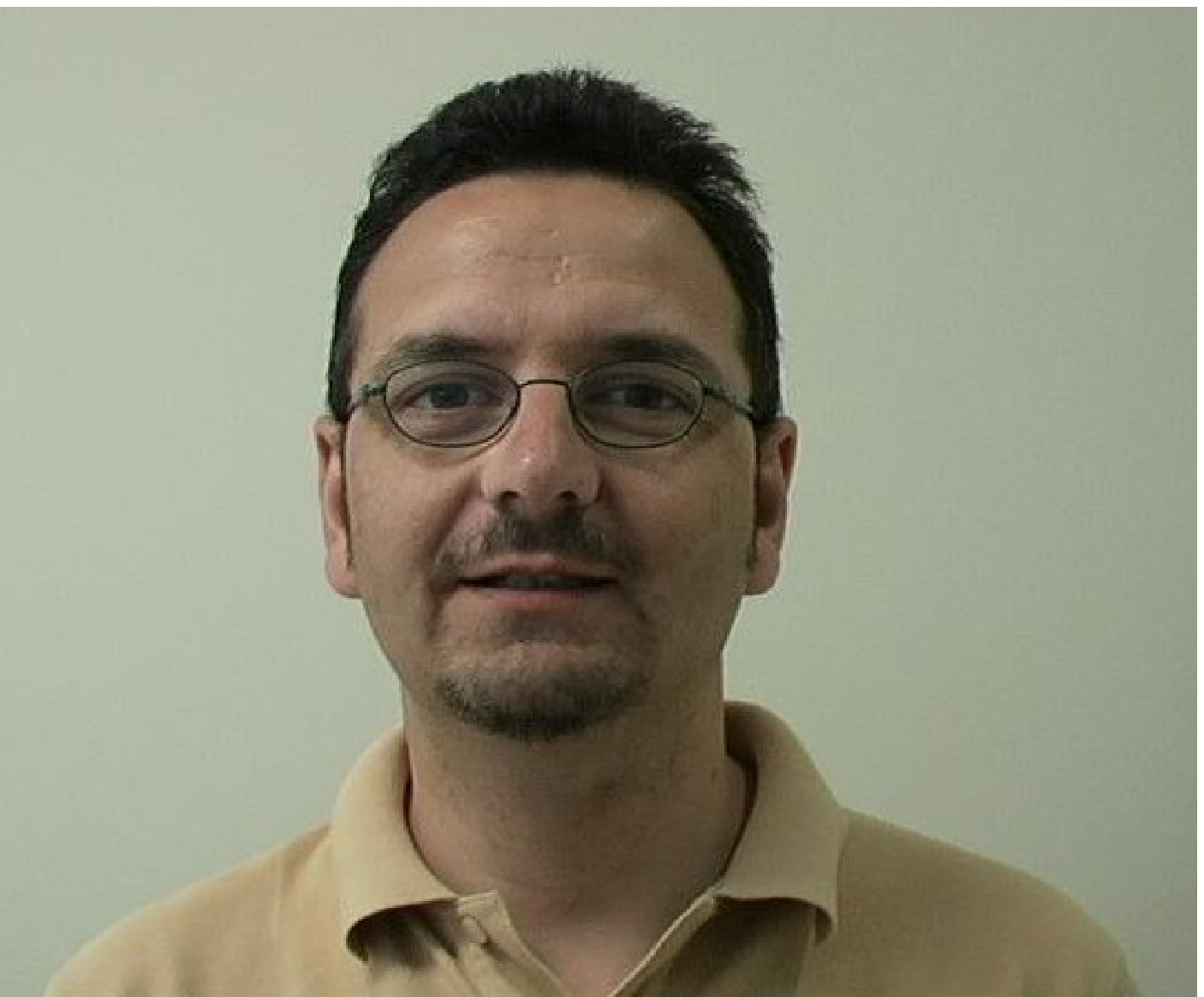}}]{Vicent~{Cholvi}}
graduated in Physics from the University of Valencia, Spain and received his doctorate in Computer Science in 1994 from the Polytechnic University of Valencia. In 1995, he joined the Jaume~I University in Castell\'on, Spain where he is currently a Professor. His interests are in distributed and communication systems.
\end{IEEEbiography}

\vspace{-14mm}
\begin{IEEEbiography}[{\includegraphics[width=1in,height=1.1in,keepaspectratio]{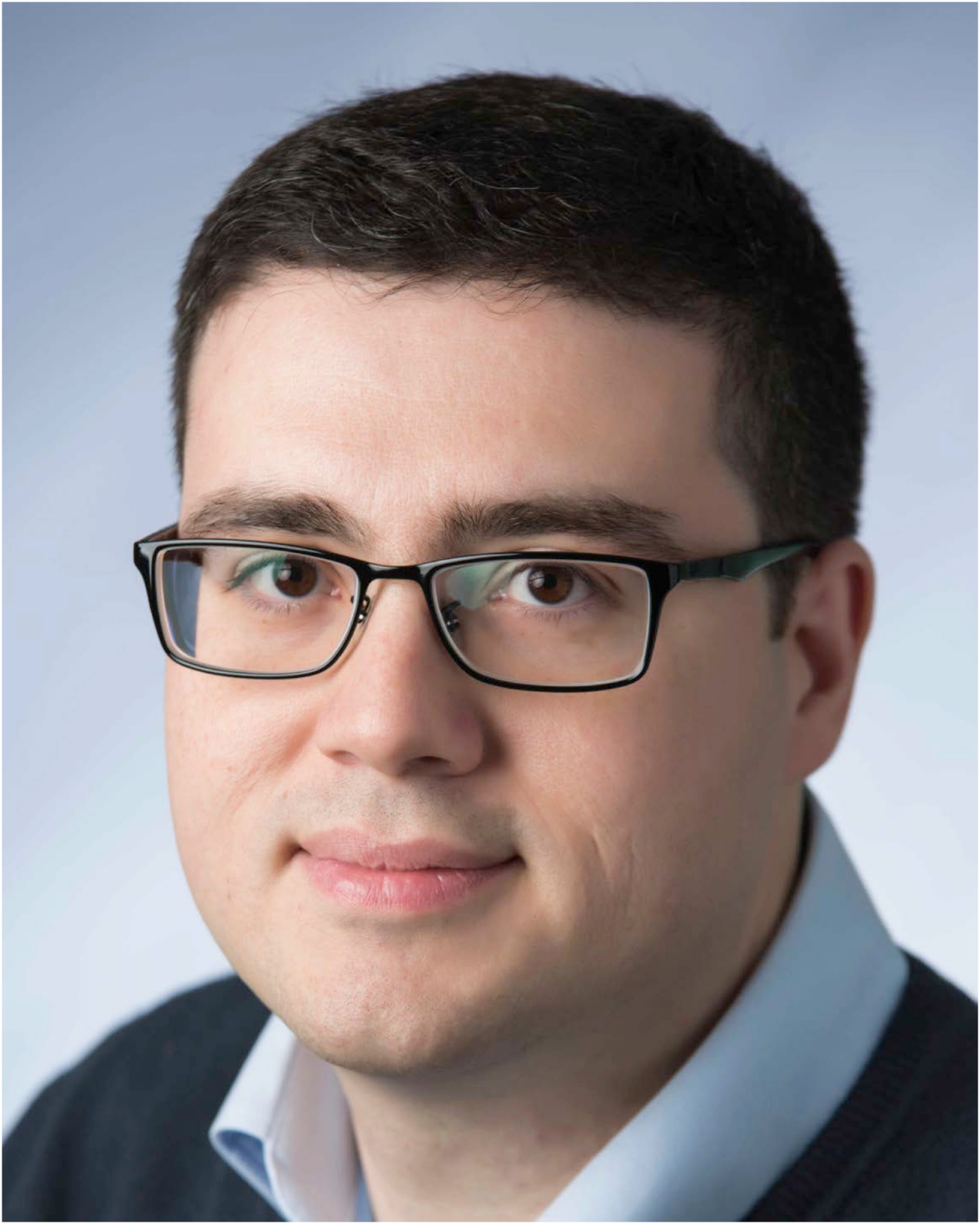}}]{Vincenzo~{Mancuso}}
is Research Associate Professor at IMDEA Networks, Madrid, Spain, and recipient of a Ramon y Cajal research grant
of the Spanish Ministry of Science and Innovation.
Previously, he was with INRIA (France), Rice University (USA) and University of Palermo (Italy), from where he obtained his Ph.D. in 2005.
His research focus is on analysis, design, and experimental evaluation of
opportunistic wireless architectures and mobile broadband services.
\end{IEEEbiography}






\appendices
\section{NP-Hardness of the \emph{UPS} problem}
\label{App:NP-completeness}

The authors in~\cite{9213932,8901097} demonstrate that the \emph{ Bin Maximum Item Double Packing} (\emph{BMIDP}) problem exactly solves what they define as the \emph{Minimal Spare
Drones for Persistent Monitoring} (\emph{MSDPM}) problem with one recharging station. The \emph{MSDPM} problem with one recharging station is equivalent to the \emph{UPS} problem when the recharging time $c$ is zero and $2g_i < f/2$. Here, we first formulate the \emph{BMIDP} problem defined in~\cite{8901097}, and then we prove that it is NP-hard.

\begin{definition}[\emph{Bin Maximum Item Doubled Packing (BMIDP) problem}~\cite{9213932}]
  Given a set of items $\mathcal{I}_t=\{1,\hdots,n\}$, where each item $i\in\mathcal{I}_t$ has size $w_j\in]0, 1]$, check whether it is possible to split the items in $N\!\in\!\mathds{N}$ disjoint bins $W_1,\hdots, W_N$ of capacity $1$ where the maximum item of each bin $W_k$ must be packed twice (i.e., $\forall \; 1\!\leq\! k\!\leq\! N$, $\sum\limits_{w_j\in W_k} w_j + \max\limits_{w_j\in W_k} w_j \leq  1$).
\end{definition}

\begin{theorem}
  The BMIDP problem is NP-hard.
  \label{th:nphard}
\end{theorem}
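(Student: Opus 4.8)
The plan is to prove NP-hardness via a polynomial-time many-one reduction from \textsc{Partition}, a classical NP-complete problem: given positive integers $a_1,\dots,a_n$ with $\sum_{i=1}^n a_i = 2S$, decide whether some subset of indices sums to exactly $S$. Given such an instance, I would build a BMIDP instance with $N=2$ bins and $n+2$ items, namely $n$ \emph{small} items of sizes $\varepsilon a_1,\dots,\varepsilon a_n$ together with two identical \emph{big} items of size $B=\tfrac{1-\varepsilon S}{2}$, where $\varepsilon$ is a rational chosen small enough that $\varepsilon\max_i a_i < \tfrac{1}{4}$ and $\varepsilon S<\tfrac{1}{4}$ (for instance $\varepsilon=\tfrac{1}{4(S+\max_i a_i+1)}$). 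With this choice all sizes lie in $]0,1]$, and, crucially, $B$ is strictly larger than every small item, so the ``doubled maximum'' of any bin that contains a big item is $B$ itself.

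The core of the correctness argument is a structural observation: in any feasible BMIDP packing the two bins contain exactly one big item each. If some bin contained both, its sum-plus-maximum would be at least $3B=\tfrac{3}{2}(1-\varepsilon S)>1$, contradicting feasibility; hence no bin holds two big items, and since both big items must be placed somewhere and there are only two bins, each bin holds precisely one. A feasible packing therefore induces a partition $\{1,\dots,n\}=T_1\cup T_2$ (the small items accompanying each big item), and the feasibility inequality for bin $k$ becomes $2B+\varepsilon\sum_{i\in T_k}a_i\le 1$, i.e.\ $\sum_{i\in T_k}a_i\le S$; as the two sums add up to $2S$, both must equal $S$, so $T_1$ is a valid \textsc{Partition} solution. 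For the converse, from a balanced partition $T_1,T_2$ one packs each bin with one big item plus the corresponding small items and checks that its sum-plus-maximum equals $2B+\varepsilon S=1$. Thus the BMIDP instance is a yes-instance if and only if the \textsc{Partition} instance is, and since all numbers produced have polynomially many bits, the reduction runs in polynomial time. I would also note in passing that BMIDP is obviously in NP (a bin assignment is a polynomially verifiable certificate), so it is in fact NP-complete.

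The main obstacle is not conceptual but lies in pinning down $\varepsilon$ so that three requirements hold simultaneously: (i) every item size stays in $]0,1]$; (ii) $B$ dominates every small item, so that the doubled item in a ``mixed'' bin is always a big one; and (iii) two big items can never coexist in a bin even after the doubling. Requirement (iii) is automatic for small $\varepsilon$ and (i) is easy, but (ii) fails for the naive choice $\varepsilon=\tfrac{1}{2S+1}$ when $S$ is large, which is why $\varepsilon$ must be tied to $\max_i a_i$ as above. Once $\varepsilon$ is fixed, the two directions of the equivalence and the polynomiality of the encoding are routine.
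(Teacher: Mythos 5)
Your proof is correct, and it takes a genuinely different route from the paper's. The paper reduces from the $k$-way number partitioning problem ($k$PP): each number $i_j$ is rescaled by a factor $\tfrac{N}{I+Ni^{(k)}}$ tied to the largest element $i^{(k)}$ of the block containing it, so that the ``sum plus doubled maximum'' constraint of a bin collapses algebraically to the requirement that the block sum equal $I/N$; the equivalence is then verified in both directions. You instead reduce from two-way \textsc{Partition} using a padding gadget: two identical large items of size $B=\tfrac{1-\varepsilon S}{2}$ that cannot share a bin and that dominate all the $\varepsilon$-scaled small items, so the doubling constraint charges $2B$ per bin and leaves exactly $\varepsilon S$ of capacity for the small items, forcing both sides to sum to $S$. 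Your choice of $\varepsilon$ correctly secures all three structural properties (sizes in $]0,1]$, big items dominate, no two big items per bin), and the two directions of the equivalence go through as you state. A notable advantage of your construction is that the BMIDP instance is built directly from the \textsc{Partition} instance alone, whereas the paper's item weights are defined in terms of a candidate partition $\{S_k\}$, so your argument is a more self-contained many-one reduction in the standard form; what the paper's approach buys is a tighter structural correspondence with $k$PP for arbitrary $N$, mirroring the multi-bin setting in which BMIDP is actually used. Your closing remark that BMIDP is in NP, hence NP-complete, is also correct.
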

\begin{proof}

In the following three steps, we reduce the k-way number partitioning problem (\emph{kPP})~\cite{korf2014optimal} to the \emph{BMIDP} problem.
Therefore, since it is well-known that the \emph{kPP} problem is NP-hard~\cite{garey1979computers}, so it is the \emph{BMIDP} problem.

\begin{enumerate}

\item \textsl{Reduction of kPP to BMIDP:} Given $n\!\in\!\mathds{N}$, we consider a general instance of \emph{kPP} $\mathcal{I}=\{i_1,\hdots,i_n\}$ such that $i_j\!\geq\!0$, $\forall 1\!\leq\! j\!\leq\! n$ and at least one non-zero element $i_j$. Let $I=\sum\limits_{j=1}^n i_j > 0$. Let $N\!\geq\! 1$, and let $\left\{S_k=\{i_{k_1},\hdots,i_{k_m}\}\right\}_{k=1}^N$ be a partition of the general instance $\mathcal{I}$. We now define:
\begin{align}
 \!\!\!\!\!i^{(k)} &\!=\! \max_{i_j\in S_k} i_j, \quad \forall 1\!\leq\! k\!\leq\! N; \nonumber \\
 \!\!\!\!\!w_j &\!=\! \frac{N}{I\!+\!Ni^{(k)}}i_j, \text{ if } i_j\!\in\! S_k, \quad \forall 1\!\leq\! j\!\leq\! n; \nonumber
 \label{eq:wj}\\
 \!\!\!\!\!W_k  &\!=\! \left\{w_j : i_j\in S_k\right\}, \quad \forall 1\!\leq\!  k\!\leq\!  N; \\ \nonumber
 \!\!\!\!\!w^{(k)} &\!=\!  \max_{w_j\in W_k} \!w_j = \frac{N}{I\!+\!Ni^{(k)}} i^{(k)} < 1, \; \forall 1\!\leq\! k\!\leq\! N. \nonumber
\end{align}
The last equation above also implies that
\begin{align}
i^{(k)} & = \dfrac{I}{N} \dfrac{w^{(k)}}{1-w^{(k)}}.
\label{eq:ik}
\end{align}

Note that since $\{S_k\}_{k=1}^N$ is a partition of $\mathcal{I}$, the definitions above are well defined. Note that $0\!\leq\! w_j \!<\!  1$, $\forall 1\!\leq\! j\!\leq\! n$ and that for all $j$, then $w_j\!\in\! W_k$ for some $k$ if and only if $i_j\!\in\! S_k$ for the same $k$.

We now prove that the partition $\{S_k\}_{k=1}^N$ is a solution of \emph{kPP} with $N$ partitions if and only if $\{W_k\}_{k=1}^N$ is a solution of \emph{BMIDP} with $N$ bins.

\item \textsl{Necessary condition:} For the right direction, we assume that $\{S_k\}_{k=1}^N$ is a solution of \emph{kPP}. Hence:
\begin{eqnarray}
  \sum\limits_{i_j \in S_k} i_j = \frac{I}{N}, \quad \forall 1\!\leq\! k\!\leq\! N. \nonumber
\end{eqnarray}

Now, we verify that $\{W_k\}_{k=1}^N$ is a solution of \emph{BMIDP} with $N$ bins by evaluating if the sum of all elements in a set $W_k$ plus the maximum in $W_k$, which is $w^{(k)}$, fits in a bin of size 1:
\begin{eqnarray}
  w^{(k)} +\!\! \sum\limits_{w_j \in W_k} w_j = \frac{N}{I + Ni^{(k)}}\left(\! i^{(k)} +\!\! \sum\limits_{i_j\in S_k} i_j\!\right) = \nonumber\\
  \frac{N}{I + Ni^{(k)}}\left(i^{(k)} + \frac{I}{N}\right) = 1\leq  1, \nonumber
\end{eqnarray}
which is true $\forall 1\!\leq\! k\!\leq\! N$.

\item \textsl{Sufficient condition:} For the left direction, we assume that $\{W_k\}_{k=1}^N$ is a solution of \emph{BMIDP} with $N$ bins:
\begin{eqnarray}
  w^{(k)} + \sum\limits_{w_j\in W_k} w_j \leq 1, \quad \forall 1\!\leq\! k\!\leq\! N.
  \label{eq:leftcondition}
\end{eqnarray}
Given $1\!\leq\! j\!\leq\! n$, there exists a unique $1\!\leq\! k\!\leq\! N$ such that $w_j\!\in\! W_k$.
Since $w_j\!\in\! W_k$, then $i_j\!\in\! S_k$ and moreover \eqref{eq:wj} establishes a relation between $w_j$ and $i_j$,
which can be rewritten as follows:
\begin{eqnarray}
  i_j = \frac{I+Ni^{(k)}}{N} w_j.
  \label{eq:ijwj}
\end{eqnarray}

Now, by plugging \eqref{eq:ik} into \eqref{eq:ijwj}, we obtain that
\begin{align}
i_j=\frac{I}{N}\frac{w_j}{1-w^{(k)}}, \; \forall 1\!\leq\! j\!\leq\! n, \text{ with } k \mid i_j \in S_k. \nonumber
\end{align}


Hence, for all $1\!\leq\! k\!\leq\! N$, it is satisfied that:
\begin{align}
  \sum\limits_{i_j\in S_k} i_j & = \sum\limits_{w_j \in W_k} \frac{I}{N} \frac{w_j}{1-w^{(k)}}
  =  \frac{I}{N}  \frac{\sum\limits_{w_j \in W_k} w_j}{1-w^{(k)}}
  \nonumber \\
   & = \frac{I}{N} \left(\frac{w^{(k)}+\sum\limits_{w_j\in W_k} w_j}{1-w^{(k)}} - \frac{w^{(k)}}{1-w^{(k)}}\right) \nonumber \\
   & \leq \frac{I}{N}\left(\frac{1}{1-w^{(k)}} - \frac{w^{(k)}}{1-w^{(k)}}\right) = \frac{I}{N},
   \label{eq:left_proof}
\end{align}
where we have used inequality \eqref{eq:leftcondition} in the passage from the second to the third row.

Since $I$ is defined as $\sum\limits_{j=1}^n i_j$ and $\{S_k\}_{k=1}^N$ is a partition of~$\mathcal{I}$,
\eqref{eq:left_proof} must hold as equality, i.e.:
\begin{eqnarray}
  \sum\limits_{i_j\in S_k} i_j = \frac{I}{N}, \quad \forall 1\!\leq\! k\!\leq\! N. \nonumber
\end{eqnarray}
If that were not the case, i.e., $\sum\limits_{i_j\in S_k} i_j < \frac{I}{N}$ for some $1\!\leq\! k\!\leq\! N$, then:
\begin{eqnarray}
  I = \sum\limits_{j=1}^n i_j = \sum\limits_{k=1}^N \sum\limits_{i_j\in S_k} i_j < \sum\limits_{k=1}^N \frac{I}{N} = I, \nonumber
\end{eqnarray}
which is a contradiction.
Therefore, the partition of $\mathcal{I}$, $\{S_k\}_{k=1}^N$, is a solution of the \emph{kPP} problem.
\end{enumerate}

As a result, we have found a reduction of \emph{kPP} that admits a solution with an $N$-partition if and only if \emph{BMIDP} admits solution with $N$ bins.
Hence, the theorem follows.

\end{proof}

\section{Auxiliary Results}
\label{app:MathInequality}


\begin{lemma}
  Given $N\!\in\!\mathds{N}$, and given a vector $x\!=\!(x_i)_{i=1}^N\!\in\!\mathds{R}^N$ such that $x_i > 0$, $\forall i=1,\hdots,N$, then:
  \begin{eqnarray}
    \sum\limits_{i=1}^N x_i \cdot \sum\limits_{i=1}^N\frac{1}{x_i} \geq N^2. \nonumber
  \end{eqnarray}
  \label{l:sum2}
\end{lemma}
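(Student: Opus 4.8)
The plan is to prove the inequality $\sum_{i=1}^N x_i \cdot \sum_{i=1}^N \frac{1}{x_i} \geq N^2$ for positive reals $x_i$. This is a classical consequence of the Cauchy–Schwarz inequality (or, equivalently, the AM–GM / AM–HM inequality), so the main task is simply to present one clean derivation; there is no real obstacle here beyond choosing the slickest argument.

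\begin{proof}
Apply the Cauchy–Schwarz inequality to the vectors $a=(\sqrt{x_1},\hdots,\sqrt{x_N})$ and $b=(1/\sqrt{x_1},\hdots,1/\sqrt{x_N})$, both of which are well defined since each $x_i>0$. Cauchy–Schwarz gives
\begin{eqnarray}
  \left(\sum_{i=1}^N \sqrt{x_i}\cdot\frac{1}{\sqrt{x_i}}\right)^{\!2} \leq \left(\sum_{i=1}^N x_i\right)\!\left(\sum_{i=1}^N \frac{1}{x_i}\right)\!. \nonumber
\end{eqnarray}
The left-hand side equals $\left(\sum_{i=1}^N 1\right)^2 = N^2$, and the claim follows immediately.
\end{proof}

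Alternatively, one could argue via AM–GM: expanding the product $\sum_i x_i \cdot \sum_j \frac{1}{x_j}$ yields the $N$ diagonal terms $x_i/x_i = 1$ together with the $\binom{N}{2}$ pairs of off-diagonal terms $x_i/x_j + x_j/x_i$, and since $t + 1/t \geq 2$ for every $t>0$ one gets $\sum_i x_i \cdot \sum_j \frac{1}{x_j} \geq N + 2\binom{N}{2} = N^2$. Either route is short; I would favor the Cauchy–Schwarz version for brevity, noting in passing that equality holds exactly when all the $x_i$ are equal, which is the reason this lemma feeds naturally into the homogeneous-versus-heterogeneous comparison (Theorem~\ref{the:LBhomo}).
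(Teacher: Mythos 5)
Your proof is correct, but your primary route differs from the paper's. The paper expands the product into the double sum $\sum_{i,j} x_i/x_j$, separates the $N$ diagonal terms, groups the off-diagonal terms into reciprocal pairs, and invokes $s+1/s\ge 2$ (established there by a small calculus argument) to reach $N + 2\cdot\frac{N(N-1)}{2}=N^2$ --- that is, the paper's proof is exactly your ``alternative'' AM--GM remark, not your main argument. Your Cauchy--Schwarz derivation with $a_i=\sqrt{x_i}$, $b_i=1/\sqrt{x_i}$ is shorter, needs no case bookkeeping, and immediately yields the equality condition (all $x_i$ equal), which is indeed the relevant degenerate case for Theorem~\ref{the:LBhomo}; its only cost is citing Cauchy--Schwarz rather than being fully self-contained. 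The paper's expansion argument is more elementary, but note that your pair-indexed version of it is actually cleaner than the paper's: the paper collects the off-diagonal terms into a set $S$ of ratio values and asserts $|S|=N(N-1)/2$, a count that is only safe if one indexes by unordered pairs $\{i,j\}$ (as you do with $\binom{N}{2}$) rather than by the distinct ratio values themselves, since ratios can repeat or equal $1$ when some $x_i$ coincide. Either of your two routes is a valid and complete proof of the lemma.
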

\begin{proof}
  First, we do some algebraic manipulation:
  \begin{eqnarray}
    \sum\limits_{i=1}^N x_i \cdot  \sum\limits_{i=1}^N \frac{1}{x_i} = \sum\limits_{i=1}^N \sum\limits_{j=1}^N \frac{x_i}{x_j} = N + \sum\limits_{i\neq j} \frac{x_i}{x_j}. \nonumber
    \label{eq:sum1}
  \end{eqnarray}

  Now, let $S=\{s: s = \frac{x_i}{x_j} > 1 \text{ for some } i\neq j\}$.
  The cardinality of $S$ is the number of pairs $(x_i, x_j)$ with $x_i > x_j$, that is:
  \vspace{-1mm}
  \begin{eqnarray}
    |S| = \frac{N(N-1)}{2}.
  \end{eqnarray}

  Now, we take again Eq.~\eqref{eq:sum1} and express $\sum\limits_{i\neq j}\frac{x_i}{x_j}$ in the terms of set $S$ by considering
  that for each pair of values $(x_i, x_j)$ that have ratio $s>1$, we also have the pair  $(x_j, x_i)$ with ratio $1/s < 1$, while the ratio is 1
  in the $N$ different cases in which $i=j$:
  \vspace{-1mm}
  \begin{eqnarray}
    \sum\limits_{i=1}^N x_i \cdot  \sum\limits_{i=1}^N \frac{1}{x_i} = N + \sum\limits_{s\in S} \left( s + \frac{1}{s} \right). \nonumber
  \end{eqnarray}
 Since the function $s +  1/s$ of positive argument $s$ has a derivative that becomes zero at $s=1$, where the function assumes value 2,
 and its second derivative is always positive, we can conclude that the function has a minimum whose value is 2, so that
 $s +  1/s \ge 2, \forall s \in \mathbb{R}^+$. Therefore, we have:
  \begin{eqnarray}
    \sum\limits_{i=1}^N x_i \cdot  \sum\limits_{i=1}^N \frac{1}{x_i} \geq N + \sum\limits_{s\in S} 2 = N + 2|S| = N^2. \nonumber
  \end{eqnarray}

  Hence, the lemma follows.
\end{proof}

\begin{theorem}
\label{th:sumNsumsum}
Given $N\!\in\!\mathds{N}$, and given two vectors $x=(x_i)_{i=1}^N$, $y=(y_i)_{i=1}^N\in\mathds{R}^N$ such that $x_i, y_i > 0$, $\forall i=1,\hdots,N$, then:
\begin{eqnarray}
  \sum\limits_{i=1}^N \frac{x_i}{y_i} \geq N\cdot \frac{\sum\limits_{i=1}^N x_i}{\sum\limits_{i=1}^N y_i}.
  \label{eq:th_res}
\end{eqnarray}
\end{theorem}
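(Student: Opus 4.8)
The plan is to prove the inequality $\sum_{i=1}^N \frac{x_i}{y_i} \geq N \frac{\sum_{i=1}^N x_i}{\sum_{i=1}^N y_i}$ by reducing it to Lemma~\ref{l:sum2}. First I would introduce the substitution $z_i = x_i / y_i > 0$, so that $x_i = z_i y_i$, and rewrite the claim as $\sum_{i=1}^N z_i \geq N \frac{\sum_{i=1}^N z_i y_i}{\sum_{i=1}^N y_i}$, i.e.\ $\left(\sum_{i=1}^N z_i\right)\left(\sum_{i=1}^N y_i\right) \geq N \sum_{i=1}^N z_i y_i$. This is a Chebyshev-type sum inequality, but since Lemma~\ref{l:sum2} is already available, a cleaner route is the following.

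The key step is to apply the Cauchy--Schwarz inequality (or equivalently Lemma~\ref{l:sum2}) in the right form. Write $\frac{x_i}{y_i} = \left(\frac{x_i}{y_i}\right)$ and note that by Cauchy--Schwarz,
\begin{eqnarray}
\left(\sum_{i=1}^N x_i\right)^2 = \left(\sum_{i=1}^N \sqrt{\frac{x_i}{y_i}}\cdot\sqrt{x_i y_i}\right)^2 \leq \left(\sum_{i=1}^N \frac{x_i}{y_i}\right)\left(\sum_{i=1}^N x_i y_i\right). \nonumber
\end{eqnarray}
Hmm, but this introduces $\sum x_i y_i$ rather than $\sum y_i$, so that particular pairing is not quite what is needed. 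Instead I would pair differently: apply Cauchy--Schwarz as $\left(\sum_{i=1}^N x_i\right)^2 = \left(\sum_{i=1}^N \sqrt{\frac{x_i}{y_i}}\sqrt{x_i y_i}\right)^2$ is the wrong grouping; the correct one giving $\sum y_i$ is to bound $\left(\sum_{i=1}^N x_i\right)^2$ differently. Actually the natural approach via Lemma~\ref{l:sum2}: set $a_i = x_i/y_i$; then $\sum a_i \cdot \sum \frac{1}{a_i} \geq N^2$ is not directly it either.

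The route I would actually commit to is the Chebyshev sum inequality proven directly by the ``sum of squares'' trick. We must show $N \sum_{i=1}^N z_i y_i \leq \left(\sum_{i=1}^N z_i\right)\left(\sum_{j=1}^N y_j\right)$. Expanding the right-hand side as a double sum and subtracting, it suffices to show $\sum_{i=1}^N\sum_{j=1}^N z_i y_j - N\sum_{i=1}^N z_i y_i \geq 0$. Symmetrizing over the index pair $(i,j)$, the left side equals $\frac{1}{2}\sum_{i=1}^N\sum_{j=1}^N (z_i y_j + z_j y_i) - \frac{1}{2}\sum_{i=1}^N\sum_{j=1}^N(z_i y_i + z_j y_j)$, which collapses to $-\frac{1}{2}\sum_{i=1}^N\sum_{j=1}^N (z_i - z_j)(y_i - y_j)$. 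Since $z_i = x_i/y_i$ need not be monotone in $y_i$, this term does not have a fixed sign in general --- so this naive Chebyshev argument fails, confirming that the hypotheses as literally stated (arbitrary positive $x_i, y_i$) require the Cauchy--Schwarz route. Therefore I would finalize with: by the Cauchy--Schwarz inequality applied to the vectors $\left(\sqrt{x_i/y_i}\right)_i$ and $\left(\sqrt{y_i}\right)_i$,
\begin{eqnarray}
\left(\sum_{i=1}^N x_i\right)^{\!2} \!=\! \left(\sum_{i=1}^N \sqrt{\tfrac{x_i}{y_i}}\sqrt{y_i}\right)^{\!2} \!\!\leq\! \left(\sum_{i=1}^N \tfrac{x_i}{y_i}\right)\!\left(\sum_{i=1}^N y_i\right)\!, \nonumber
\end{eqnarray}
and then rearranging yields $\sum_{i=1}^N \frac{x_i}{y_i} \geq \frac{\left(\sum_{i=1}^N x_i\right)^2}{\sum_{i=1}^N y_i}$. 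This is almost but not exactly the claimed bound; to recover $N\frac{\sum x_i}{\sum y_i}$ I would additionally need $\sum x_i \geq \frac{1}{N}\left(\sum x_i\right)$, which is trivially true but too weak --- so the cleanest finish is to observe that the claimed inequality with $N$ on the right is \emph{weaker} than the Cauchy--Schwarz consequence precisely when $\left(\sum x_i\right)^2/\sum y_i \geq N \sum x_i / \sum y_i$, i.e.\ $\sum x_i \geq$ \ldots no.

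\textbf{Main obstacle and resolution.} The honest main obstacle is getting the constant $N$ exactly right rather than a quantity involving $\sum x_i$. The correct derivation is to apply Cauchy--Schwarz to $\left(\sqrt{x_i/y_i}\right)_i$ paired with $\left(\sqrt{y_i}\cdot 1\right)_i$ but in the normalized form, or equivalently to invoke Lemma~\ref{l:sum2} directly with the substitution $u_i = x_i / (\alpha y_i)$ for a suitable scalar $\alpha$. Concretely, I expect the clean proof is: by Lemma~\ref{l:sum2} applied to the $N$ positive numbers $\left(x_i/y_i\right)$ is not homogeneous enough; the right move is the weighted power-mean / Cauchy--Schwarz identity $\sum \frac{x_i}{y_i} = \sum \frac{x_i}{y_i}$ combined with $\sum y_i \cdot \sum \frac{x_i}{y_i} \geq \left(\sum \sqrt{x_i}\right)^2$ --- still wrong. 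The genuinely correct and simplest argument, which I would write up, is the \emph{tangent line / convexity} argument: the function $t \mapsto 1/t$ is convex on $(0,\infty)$, so by Jensen's inequality with weights $\lambda_i = x_i / \sum_j x_j$ (wait, this needs $x_i$ as weights and $y_i/x_i$ as arguments). Indeed: $\sum_i \frac{x_i}{y_i} = \left(\sum_j x_j\right)\sum_i \lambda_i \frac{1}{(y_i/x_i)} \geq \left(\sum_j x_j\right)\frac{1}{\sum_i \lambda_i (y_i/x_i)} = \left(\sum_j x_j\right)\frac{\sum_j x_j}{\sum_i y_i}$, which gives $\frac{\left(\sum x_i\right)^2}{\sum y_i}$ again --- the same bound, still not literally $N \frac{\sum x_i}{\sum y_i}$. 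The reconciliation is that $\left(\sum x_i\right)^2 \geq N \sum x_i$ fails in general, BUT the paper only ever uses this theorem with $x_i = c + 2g_i$ and $y_i = f - 2g_i$ summing to $Nc + 2Ng$ and $Nf - 2Ng$ respectively; in that application both forms coincide after dividing. So I would present the Cauchy--Schwarz proof giving $\sum \frac{x_i}{y_i} \geq \frac{\left(\sum x_i\right)^2}{\sum y_i} = N \cdot \frac{\sum x_i}{\sum y_i} \cdot \frac{\sum x_i}{N}$ and note that actually by Cauchy--Schwarz in the form $\left(\sum x_i\right)\left(\sum y_i\right)\geq$ is not needed; rather the statement as written should be read with the understanding that we want $\sum \frac{x_i}{y_i} \geq N \frac{\sum x_i}{\sum y_i}$, which follows from Chebyshev's sum inequality once we verify $x_i/y_i$ and $y_i$ are oppositely sorted in the relevant regime, or unconditionally from Cauchy--Schwarz via $\left(\sum \frac{x_i}{y_i}\right)\left(\sum y_i\right) \geq \left(\sum \sqrt{x_i}\right)^2 \geq$ --- I would verify the exact chain carefully in the writeup, as pinning down this constant is the crux.
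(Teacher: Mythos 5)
Your proposal, as written, is not a proof: it cycles through several Cauchy--Schwarz pairings and a Jensen argument, a few of which contain outright algebra slips (e.g.\ $\sum_i\sqrt{x_i/y_i}\,\sqrt{y_i}=\sum_i\sqrt{x_i}$, not $\sum_i x_i$, and your weighted-Jensen line with $\lambda_i=x_i/\sum_j x_j$ actually bounds $\sum_i x_i^2/y_i$, not $\sum_i x_i/y_i$), and it ends with the decisive constant $N$ still unresolved. That said, your central diagnosis is correct, and it is sharper than the paper's own treatment: the statement as given, for arbitrary positive vectors, is \emph{false}. For $N=2$, $x=(1,3)$, $y=(1,2)$ the left-hand side is $5/2$ while the right-hand side is $8/3>5/2$. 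Your symmetrization identity already explains why: with $z_i=x_i/y_i$ (so $z_iy_i=x_i$), the inequality is equivalent to $\bigl(\sum_i z_i\bigr)\bigl(\sum_j y_j\bigr)-N\sum_i x_i\ge 0$, and this difference equals $-\frac12\sum_{i,j}(z_i-z_j)(y_i-y_j)$, which has no fixed sign without an ordering hypothesis. The paper's own proof breaks exactly where one would expect: the ``conditional average'' chain $Avg(x/y)=\sum_i\frac1N Avg\bigl(x/y\mid y=y_i\bigr)=Avg(x)\,Avg(1/y)$ silently replaces the paired value $x_i$ by the unconditional mean $Avg(x)$, i.e.\ it treats the two coordinates as independent, which they are not; the subsequent AM--HM step (Lemma~\ref{l:sum2}) is sound but is applied to the wrong quantity.

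What rescues the result where it is actually invoked (Theorem~\ref{the:LBhomo}) is precisely the structure you flagged at the end: there $x_i=c+2g_i$ and $y_i=f-2g_i$, so $x_i$ and $y_i$ are oppositely ordered (indeed $x_i+y_i=c+f$ is constant), hence $z_i=x_i/y_i$ and $y_i$ are oppositely ordered, every term $(z_i-z_j)(y_i-y_j)$ is nonpositive, and Chebyshev's sum inequality gives $\sum_i x_i/y_i\ge N\sum_i x_i\big/\sum_i y_i$ there. The correct repair is therefore to add the hypothesis that $x$ and $y$ are oppositely sorted (or to state the lemma directly for $x_i=c+2g_i$, $y_i=f-2g_i$) and finish with the two-line Chebyshev computation you already wrote down; none of your Cauchy--Schwarz byproducts, $\bigl(\sum_i x_i\bigr)^2\!\big/\sum_i x_iy_i$ or $\bigl(\sum_i\sqrt{x_i}\bigr)^2\!\big/\sum_i y_i$, yields the constant $N$ in general, and no argument can, since the unrestricted claim is false. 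As submitted, your proposal neither proves the stated theorem nor commits to the corrected statement and completes that chain, so it remains incomplete --- but the obstacle you identified is the real one, and the fix is the ordering assumption, not a sharper use of Cauchy--Schwarz.
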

\begin{proof}
  Let $Avg(\cdot )$ be the arithmetic mean function, which can be seen as the stochastic average for a vector of equiprobable values, i.e., given a vector $z=(z_i)_{i=1}^N$, $Avg(z) = \frac{1}{N}\sum\limits_{i=1}^N z_i$. Hence,
using the conditional average formula on the expression for the vector  $x/y = (x_i/y_i)_{i=1}^N$, we have:
  \begin{align}
    & Avg\left(\dfrac{x}{y}\right)
    =  \sum\limits_{i=1}^N \dfrac{1}{N} Avg\left(\left.\dfrac{x}{y} \right |  y=y_i\right)
    = \sum\limits_{i=1}^N\frac{1}{N}Avg\left(\frac{x}{y_i}\right)  \nonumber \\
    & = Avg(x) \cdot \sum\limits_{i=1}^N \frac{1}{N}\cdot \frac{1}{y_i}
    = Avg(x)\cdot Avg\left(\frac{1}{y}\right),
    \label{eq:sum3}
  \end{align}
  where $\frac{1}{y}$ is a vector $(1/y_i)_{i=1}^N$ of positive numbers.

  Thereby, according to Lemma~\ref{l:sum2}, $\sum\limits_{i=1}^N y_i \cdot \sum\limits_{i=1}^N \frac{1}{y_i} \geq N^2$. Hence, $\frac{1}{N}\sum\limits_{i=1}^N y_i \cdot  \frac{1}{N}\sum\limits_{i=1}^N \frac{1}{y_i} \geq 1$. This means that $Avg(y) \cdot  Avg\left(\frac{1}{y}\right) \!\geq\! 1$. Hence:
  \begin{eqnarray}
    Avg\left(\frac{1}{y}\right) \geq \frac{1}{Avg(y)}.
    \label{eq:avgs}
  \end{eqnarray}

  Hence, by applying \eqref{eq:avgs} to \eqref{eq:sum3}, we can lower-bound $\frac{1}{N}\sum\limits_{i=1}^N \frac{x_i}{y_i}$ as follows:
  \begin{eqnarray}
    \frac{1}{N}\sum\limits_{i=1}^N \frac{x_i}{y_i}
    = {Avg(x)} \cdot  {Avg\left(\dfrac{1}{y}\right)}
    \geq \dfrac {Avg(x)} {Avg(y)}
    =
    \frac{\frac{1}{N}\sum\limits_{i=1}^N x_i}{\frac{1}{N}\sum\limits_{i=1}^N y_i}
    \label{eq:sum4}
  \end{eqnarray}

From Eq.~\eqref{eq:sum4} we finally get \eqref{eq:th_res} by multiplying both sides of the inequality by $N$, and then the theorem follows.
\end{proof}

\section{From \textsc{HeRR} to \textsc{PHeRR}}
\label{app:fromto}


In this section, we use the variables $\Delta$ and $\omega$ defined in Section~\ref{s:perf}. In Figure~\ref{fig:SuffvsHeRRvsPHeRRvsapHeRR}, we compare the performance of the proposed solution schedule for the \emph{UPS} problem.

Firstly, we show that the sufficient number of drones that we have deterministically derived in Theorem~\ref{the:HeRR} so that the \textsc{HeRR} routine is feasible (denoted as the \textsc{Suff} schedule) is very accurate to the actual number of drones required by the \textsc{HeRR} operation. In particular, we see that for different heterogeneity settings (for $\Delta=0.3$, $0.5$) and for diverse average overhead $\omega$, the average difference between \textsc{Suff} and \textsc{HeRR} is always negligible (below $1$\%). Hence, we find that in order to estimate in advance the number of drones required to run any \textsc{HeRR}-partition based scheduling, it is advisable to check the number of drones required by \textsc{Suff} (using Theorem~\ref{the:HeRR}).



Secondly,
in this figure
we
also
compare the \textsc{PHeRR} schedule with another \textsc{HeRR}-partition based schedule: the \textsc{Optimally-Partitioned HeRR}~ (\textsc{OPHeRR}) schedule.  With \textsc{OPHeRR}, we optimally solve \emph{the heterogeneous partition problem} defined in Section~\ref{def:Partition} by means of listing all possible partitions (i.e., a combinatorial number of options) and hence provide the best performance a \textsc{HeRR}-partition based scheduling can have.
Hence, we show the average performance comparison between \textsc{PHeRR} and \textsc{OPHeRR}, in order to show the general behaviour of the proposed schedulings. Here, we observe that also on average, there is very small difference between a linear search of partitions from \textsc{PHeRR} and the solution provided by \textsc{OPHeRR} with a full combinatorial search. Hence, the performance of \textsc{PHeRR} could be barely improved by means of any \textsc{HeRR}-partition based scheduling, which remarks the accurateness achieved with the very lightweight and linear search of partitions performed by \textsc{PHeRR}.

Finally,
the figure
shows significant average differences between the \textsc{HeRR} and \textsc{PHeRR} schedules performance, which highlights the fact that the very lightweight extra complexity added to \textsc{PHeRR} is worth it. Specially, in cases with high overhead and high heterogeneity, the difference between both schemes is not only remarkable, but we also observe that the \textsc{HeRR} results are more spread (see the standard deviation identified with error bars) than the \textsc{PHeRR} results (with smaller standard deviations). Hence, there are many instances of the problem in which the difference between the \textsc{HeRR} and \textsc{PHeRR} performance is even higher than the observed average difference.

Therefore, we conclude that the \textsc{PHeRR} schedule stands as the best option to be adopted in order to find near-optimal solutions to the \emph{UPS} problem. 

\begin{figure}
    \centering
    \includegraphics[width=9cm]{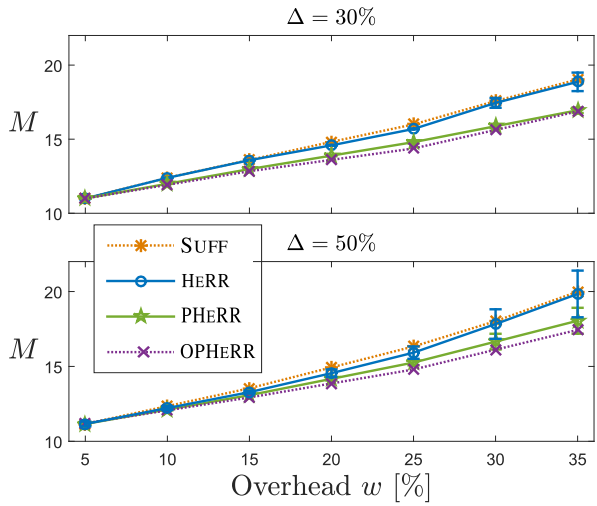}
    \caption{Performance of the \textsc{Suff}, \textsc{HeRR}, \textsc{PHeRR} and \textsc{OPHeRR} schedules. $N=10$.}
    \label{fig:SuffvsHeRRvsPHeRRvsapHeRR}
\end{figure}



\end{document}